 \let\mathscr\relax
\newcommand{\edesirs}{\mathcal{E}}
\newcommand{\pspace}{\varOmega}
\newcommand{\domain}{\mathcal{K}}
\newcommand{\gambles}{\mathcal{L}}
\newcommand{\posi}{\mathsf{posi}}
\newcommand{\desirs}{\mathcal{D}}
\newcommand{\partit}{\mathcal{P}}
\title[Information algebras of coherent sets of gambles]{
  Information algebras of coherent sets of gambles \titlebreak~ 
 in general possibility spaces
}
\author{
  \Name{Juerg Kohlas}\Email{juerg.kohlas@unifr.ch}\\ 
  \addr Department of Informatics DIUF, University of Fribourg, Switzerland
  \AND
  \Name{Arianna Casanova}\Email{arianna@idsia.ch}\\
    \Name{Marco Zaffalon}\Email{zaffalon@idsia.ch}\\
  \addr Istituto Dalle Molle di Studi sull’Intelligenza Artificiale (IDSIA), Switzerland
}
\begin{document}
\maketitle

\begin{abstract}
In this paper, we show that coherent sets of gambles can be embedded into the algebraic structure of \emph{information algebra}. This leads firstly, to a new perspective of the algebraic and logical structure of desirability and secondly, it connects desirability, hence imprecise probabilities, to other formalism in computer science sharing the same underlying structure.
Both the \emph{domain-free} and the \emph{labeled} view of the information algebra of coherent sets of gambles are presented, considering general possibility spaces.
\end{abstract}
\begin{keywords}
  desirability, information algebras, order theory, imprecise probabilities, coherence.
\end{keywords}

\section{Introduction and Overview}

Recently \citet{mirzaffalon20} have derived some results about compatibility or consistency of coherent sets of gambles and remarked that these results were in fact about the theory of information or valuation algebras (see \citep{kohlas03}).  

This point of view however, was not worked out in \citep{mirzaffalon20}.  In this and in our previous work \cite{kohlas21} this issue is taken up. We abstract away properties of desirability that can 
be regarded as properties of information algebras rather than special ones of desirability. This is made showing  that, in particular,  coherent  sets  of  gambles augmented with the set of all gambles form an information algebra. A similar scope was pursued by De Cooman in \citep{decooman05}. He discovered indeed that there is a common order-theoretic structure underlying many of the models for representing beliefs in the literature, including lower previsions and sets of almost desirable gambles. Even if they share some elements, the latter focuses more on the study of \emph{belief dynamics} (belief expansion and belief revision). 

From the point of view of information algebras, sets of gambles defined on a possibility space $\Omega$ are indeed pieces of information about certain questions or variables identified by families of equivalence relations $\equiv_x$ on $\Omega$, for $x$ in some index set $Q$. In particular, this paper is intended as an extension of our previous paper, in which we treat the particular case where
information one is interested in concerns the values of certain groups of
variables $\{X_i: \; i \in I\}$ with $I$ an index set, $\Omega = \bigtimes_{i \in I} \Omega_i$, where $\Omega_i$ is the set of possible values of $X_i$, and $\omega \equiv_S \omega' \iff \omega|_S=  \omega'|_S$, \footnote{If we think of $\omega \in \Omega$, as a map $\omega: I \rightarrow \Omega$, $\omega|_S$ is the restriction of the map $\omega$ to $S$.} for every $S \subseteq I$ and $\omega, \omega' \in \Omega$. (see \cite{kohlas21}). 
  
  Such pieces of information can be aggregated and the information they contain about specific questions can be extracted.  This leads to an algebraic structure satisfying a number of simple axioms.  
  There are two different versions of information algebras: a \emph{domain-free} one that correspond to the general treatment of coherent sets of gambles defined on $\Omega$; a \emph{labeled one}, more suitable when gambles implicitly depend only on a specific question. They are closely related and each one can be derived or reconstructed form the other. The domain-free version is better suited for theoretical studies, since it is a structure of universal algebra, whereas the labeled one is better adapted to computational purposes (see \citep{kohlas03}).
  
  In this paper both the views are presented. We begin outlining some preliminaries in Section \ref{sec:DesGambles} and Section \ref{sec:Questions}. In Section \ref{sec:InfAlgs} we derive the domain-free information algebra of coherent sets of gambles. In Section \ref{sec:Atoms} we prove that, in particular, it is an \emph{atomistic} information algebra. Finally, in Section \ref{sec:LabInfAlg}, we derived the labeled version from the domain-free one and in Section \ref{sec:CommExtractors} we analyze the particular case of commuting \emph{extraction} operators that leads to the multivariate case considered in \cite{kohlas21}. 



\section{Desirability} \label{sec:DesGambles}

Consider a set $\Omega$ of possible worlds. A gamble over this set is a bounded function
$f : \Omega \rightarrow \mathbb{R}$.

A gamble is interpreted as an uncertain reward in a linear utility scale. A subject might desire a gamble or not, depending on the information she has about the experiment whose possible outcomes are the elements of $\Omega$.

We denote the set of all gambles on $\Omega$ by $\mathcal{L}(\Omega)$, or more simply by $\mathcal{L}$ when there is no possible ambiguity. We also let $\mathcal{L}^+(\Omega) \coloneqq \{ f \in \mathcal{L}(\Omega): \; f\geq 0, f \not= 0\}$, or simply $\mathcal{L}^+$, denote the subset of non-vanishing, non-negative gambles. These gambles should always be desired, since they may increase the wealth with no risk of decreasing it.
As a consequence of the linearity of our utility scale we assume also that a subject disposed to accept the transactions represented by $f$ and $g$, is disposed to accept also $\lambda f + \mu g$ with $\lambda, \mu \ge 0$ not both equal to $0$.
More generally, we can consider the notion of a coherent set of gambles (see \citep{walley91}).
\begin{definition}[\textbf{Coherent set of desirable gambles}]
We say that a subset $\desirs$ of $\gambles(\pspace)$ is a \emph{coherent} set of desirable gambles if and only if $\desirs$ satisfies the following properties:
\begin{enumerate}[label=\upshape D\arabic*.,ref=\upshape D\arabic*]
\item\label{D1} $\gambles^+ \subseteq \desirs$ [Accepting Partial Gains];
\item\label{D2} $0\notin \desirs$ [Avoiding Null Gain];
\item\label{D3} $f,g \in \desirs \Rightarrow f+g \in \desirs$ [Additivity];
\item\label{D4} $f \in \desirs, \lambda>0 \Rightarrow \lambda f \in \desirs$ [Positive Homogeneity].
\end{enumerate}
\end{definition}
So, $\desirs$ is a convex cone.
This leads to the concept of natural extension.
\begin{definition}[\bf{Natural extension for gambles}] \label{def:natex}Given a set $\domain\subseteq\gambles(\Omega)$, we call $
\edesirs(\domain) \coloneqq\posi(\domain\cup\gambles^+)$,
where
\begin{equation*}
\posi(\domain')\coloneqq\left\{ \sum_{j=1}^{r} \lambda_{j}f_{j}: f_{j} \in \domain', \lambda_{j} > 0, r \ge 1\right\}
\end{equation*}
for every set $\domain' \subseteq \gambles(\Omega)$, its \emph{natural extension}.
\end{definition}
The natural extension $\edesirs(\desirs)$, of a set of gambles $\desirs$, is coherent if and only if $0 \not\in \edesirs(\desirs)$.

Coherent sets are closed under intersection, that is they form a topless $\cap$-structure (see \citep{daveypriestley97}). By standard order theory (see \citep{daveypriestley97}), they are ordered by inclusion, intersection is meet in this order and a join exists if they have an upper bound among coherent sets:
\begin{equation*}
\bigvee_{i \in I} \desirs_i \coloneqq \bigcap \left\{\desirs \in C(\Omega):  \bigcup_{i \in I} \desirs_i \subseteq \desirs \right\},
\end{equation*}
if we denote with $C(\Omega)$, or simply with $C$, the family of coherent sets of gambles on $\Omega$.

Notice also that, if $0 \notin \mathcal{E}(\desirs')$, $\mathcal{E}(\desirs')$ is the smallest coherent set containing $\desirs'$. Therefore, if $\mathcal{E}(\bigcup_{i \in I} \desirs_i)$ is coherent, we have
\begin{equation*}
\bigvee_{i \in I} \desirs_i = \mathcal{E}\left(\bigcup_{i \in I } \desirs_i\right).
\end{equation*}
In view of the following section, it is convenient to add $\mathcal{L}(\Omega)$ to $C(\Omega)$ and let $\Phi(\Omega) \coloneqq C(\Omega) \cup \{\mathcal{L}(\Omega)\}$. The family of sets in $\Phi(\Omega)$, or simply $\Phi$ where there is no possible ambiguity, is still a $\cap$-structure, but now a topped one (see \citep{daveypriestley97}). So, again by standard results of order theory, $\Phi$ is a complete lattice under inclusion, meet is intersection and join is defined for any family of sets $\desirs_i \in \Phi$ as
\begin{equation*}
\bigvee_{i \in I} \desirs_i \coloneqq \bigcap \left\{\desirs \in \Phi: \bigcup_{i \in I} \desirs_i \subseteq \desirs \right\}.
\end{equation*}
Note that, if the family of coherent sets $\desirs_i$ has no upper bound in $C$, then its join is simply $\mathcal{L}(\Omega)$. 

In this topped $\cap$-structure, let us define the following operator
\begin{equation}\label{eq:closureoperatorC}
\mathcal{C}(\desirs') \coloneqq \bigcap \{\desirs \in \Phi: \desirs' \subseteq \desirs\}.
\end{equation}
It can be shown that is a closure (or consequence) operator on subsets of gambles (see \citep{daveypriestley97}).
For further reference, it is easy to prove also the following well-known result.

\begin{lemma} \label{le:UnionClosSets}
For any $\desirs_1, \desirs_2 \subseteq \gambles$ we have:
\begin{equation*}
\mathcal{C}(\mathcal{C}(\desirs_1) \cup \desirs_2) = \mathcal{C}(\desirs_1 \cup \desirs_2).
\end{equation*}
\end{lemma}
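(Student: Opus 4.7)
The plan is to invoke only the three defining properties of a closure operator --- extensivity ($\desirs \subseteq \mathcal{C}(\desirs)$), monotonicity ($\desirs \subseteq \desirs' \Rightarrow \mathcal{C}(\desirs) \subseteq \mathcal{C}(\desirs')$), and idempotency ($\mathcal{C}(\mathcal{C}(\desirs)) = \mathcal{C}(\desirs)$) --- which the excerpt already states hold for $\mathcal{C}$. So I would not unpack the definition of $\mathcal{C}$ in terms of intersections of members of $\Phi$; I would just treat $\mathcal{C}$ abstractly.

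For the inclusion $\mathcal{C}(\desirs_1 \cup \desirs_2) \subseteq \mathcal{C}(\mathcal{C}(\desirs_1) \cup \desirs_2)$, I would observe that by extensivity $\desirs_1 \subseteq \mathcal{C}(\desirs_1)$, so $\desirs_1 \cup \desirs_2 \subseteq \mathcal{C}(\desirs_1) \cup \desirs_2$, and monotonicity of $\mathcal{C}$ gives the desired inclusion.

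For the reverse inclusion $\mathcal{C}(\mathcal{C}(\desirs_1) \cup \desirs_2) \subseteq \mathcal{C}(\desirs_1 \cup \desirs_2)$, I would first show that $\mathcal{C}(\desirs_1) \cup \desirs_2 \subseteq \mathcal{C}(\desirs_1 \cup \desirs_2)$. Indeed, $\desirs_1 \subseteq \desirs_1 \cup \desirs_2$ gives by monotonicity $\mathcal{C}(\desirs_1) \subseteq \mathcal{C}(\desirs_1 \cup \desirs_2)$, while $\desirs_2 \subseteq \desirs_1 \cup \desirs_2 \subseteq \mathcal{C}(\desirs_1 \cup \desirs_2)$ by extensivity. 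Taking the union and then applying $\mathcal{C}$ via monotonicity yields $\mathcal{C}(\mathcal{C}(\desirs_1) \cup \desirs_2) \subseteq \mathcal{C}(\mathcal{C}(\desirs_1 \cup \desirs_2))$, and idempotency collapses the right-hand side to $\mathcal{C}(\desirs_1 \cup \desirs_2)$.

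There is really no hard step here; the only thing to be careful about is to make sure that the three closure-operator properties are genuinely available, and the excerpt (together with the cited \citep{daveypriestley97}) confirms they are. The proof is therefore a short, purely formal manipulation that does not rely on any specific feature of gambles, coherence, or the lattice $\Phi$ beyond the fact that $\mathcal{C}$ is a closure operator on subsets of $\gambles$.
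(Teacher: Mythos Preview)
Your argument is correct: the two inclusions follow exactly as you describe from extensivity, monotonicity, and idempotency of $\mathcal{C}$, and nothing specific to gambles or $\Phi$ is needed. The paper itself does not give a proof of this lemma --- it simply labels it a ``well-known result'' that is ``easy to prove'' --- so your abstract closure-operator derivation is precisely the intended (and standard) justification.
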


Note that $\mathcal{C}(\desirs) = \mathcal{E}(\desirs)$ if $0 \not\in \mathcal{E}(\desirs)$, that is if $\mathcal{E}(\desirs)$ is coherent. Otherwise we may have $\mathcal{E}(\desirs) \not= \mathcal{L}(\Omega)$.  These results prepare the way to an information algebra of coherent sets of gambles (see Section \ref{sec:InfAlgs}). 

The most informative cases of coherent sets of gambles, i.e. coherent sets that are not proper subsets of other coherent sets, are called \textit{maximal}.
\begin{definition}[\textbf{Maximal coherent set of gambles}]
A coherent set of desirable gambles $\desirs$ is maximal if and only if
\begin{equation*}
(\forall f \in \gambles \setminus \{0\})\ f \notin \desirs \Rightarrow -f \in \desirs.
\end{equation*}
\end{definition}
We shall indicate maximal sets of gambles with $M$ to differentiate them from the general case of coherent sets.
These sets play an important role with respect to information algebras (see Section \ref{sec:Atoms}) because of the following facts proved in \citep{CooQua12}:
\begin{enumerate}
\item any coherent set of gambles is a subset of a maximal one;
\item any coherent set of gambles is the intersection of all maximal coherent sets it is contained in.
\end{enumerate}
So far, we have considered sets of gambles in $\mathcal{L}(\Omega)$ relative to a general set of possibilities $\Omega$. 
In the next section, we introduce some structure into it, which allows afterwards to embed coherent sets of gambles into the algebraic structure of information algebras.

%
%
%
%


\section{Stucture of Questions and Possibilities} \label{sec:Questions}

As before, let $\Omega$ be a set of possible worlds. We consider families of equivalence relations $\equiv_x$ for $x$ in some index set $Q$. Informally, we mean that $Q$ represents questions and a question $x \in Q$ has the same answer in possible worlds $\omega$ and $\omega'$, if $\omega \equiv_x \omega'$ (see also \cite{kohlas17}). 

There is a partial order between questions capturing granularity: question $y$ is finer than question $x$ if $\omega \equiv_y \omega'$ implies $\omega \equiv_x \omega'$. This order becomes maybe clearer, if we consider partitions $\partit_x, \; \partit_y$ of $\Omega$ whose blocks are respectively the equivalence classes $[\omega]_x, \; [\omega]_y$ of the equivalence relations $\equiv_x, \; \equiv_y$, representing possible answers to $x$ and $y$.

Then $\omega \equiv_y \omega'$ implies $\omega \equiv_x \omega'$, means that any block $[\omega]_y$ of partition $\partit_y$ is contained in some block $[\omega]_x$ of partition $\partit_x$. The partition of $\Omega$ by $\partit_x$ is coarser than the one by $\partit_y$. 
If this is the case, we say that: $\partit_x \leq \partit_y$.\footnote{In order literature usually the inverse order between partitions is considered. However, this order corresponds better to our natural order of questions by granularity.}

It is well-known that partitions $Part(\Omega)$ of any set $\Omega$ form a lattice under this order. In particular, the partition $\sup\{\partit_x,\partit_y\} = \partit_x \vee \partit_y$ of two partitions $\partit_x, \partit_y$ is, in this order, the partition obtained as the non-empty intersections of blocks of $\partit_x$ with blocks of $\partit_y$. 

We usually assume that the set of questions $Q$ analyzed, considered together with their associated partitions that we denote with $\mathcal{Q} \coloneqq \{\partit_x:x \in Q\}$, is a join-sub-semilattice of $(Part(\Omega),\leq)$ (see \citep{daveypriestley97}). 

As observed before, we may transport the order between partitions to $Q$ and vice versa: $x \leq y$ iff $\partit_x \leq \partit_y$ and we have then $\partit_x \vee \partit_y = \partit_{x \vee y}$ and $\partit_x \wedge \partit_y = \partit_{x \wedge y}$. Furthermore, we assume also often that the top partition in $Part(\Omega)$, i.e. $\partit_\top$ (where the blocks are singleton sets $\{\omega\}$ for $\omega \in \Omega$), belongs to $\mathcal{Q}$.

A gamble $f$ on $\Omega$ is called \emph{$x$-measurable} if for all $\omega \equiv_x \omega'$ we have $f(\omega) = f(\omega')$, that is, if $f$ is constant on every block of $\partit_x$. It could then also be considered as a function (a gamble) on the set of blocks of $\partit_x$.

Let $\mathcal{L}_x(\Omega)$, or $\gambles_x$ when there is no ambiguity, denote the set of all $x$-measurable gambles, so that $\mathcal{L}_\top = \mathcal{L}$. Note that $\mathcal{L}_x$, as well as $\mathcal{L}$, is a linear space for all $x$.

Further, $x \leq y$ if and only if $\mathcal{L}_x$ is a subspace of $\mathcal{L}_y$. So we have $\mathcal{L}_x,\mathcal{L}_y \subseteq \mathcal{L}_{x \vee y}$. In fact $\mathcal{L}_{x \vee y}$ is the smallest subspace containing $\mathcal{L}_x$ and $\mathcal{L}_y$.

Sometimes we want to consider partitions so that $\mathcal{L}_{x \wedge y} = \mathcal{L}_x \cap \mathcal{L}_y$. We show below (Lemma \ref{le:CommRel}) that for this case it is sufficient and necessary that $\omega \equiv_{x \wedge y} \omega'$ implies the existence of an $\omega''$ so that $\omega \equiv_x \omega'' \equiv_y \omega'$.

We consider below coherent sets of gambles as pieces of information, describing beliefs about the likeliness of the possibilities in $\Omega$. However, we may be interested in the content of this information relative to some question $x \in Q$, and we propose how to extract this part of information from the original one. Also, possible beliefs may be originally expressed relative to different questions and these pieces of information must be combined to an aggregated belief. 

This leads then to an algebraic structure, called an information algebra (see \citep{kohlas03}). In the form sketched here, it will be, more precisely, a \emph{domain-free information algebra} (Section \ref{sec:InfAlgs}). Later on, in Section \ref{sec:LabInfAlg}, we consider a \emph{labeled} version of the algebra.
To do this, we first need to introduce a qualitative or logical independence relation between partitions (see \citep{kohlasmonney95,kohlas17}). 
\begin{definition}[Independent Partitions]
For a finite set of partitions $\partit_1,\ldots,\partit_n \in Part(\Omega)$, $n \geq 2$, let us define
\begin{equation*}
R(\partit_1,\ldots,\partit_n) \coloneqq \{(B_1,\ldots,B_n):B_i \in \partit_i,\cap_{i=1}^n B_i \not= \emptyset\}.
\end{equation*}
We call the partitions \emph{independent}, if
\begin{equation*}
R(\partit_1,\ldots,\partit_n) = \partit_1 \times \cdots \times \partit_n.
\end{equation*}
\end{definition}

The intuition behind this definition is the following: $R( \partit_1, \dots, \partit_n)$ contains the tuples of mutually compatible blocks of $\partit_1, \dots, \partit_n$, representing compatible answers to the $n$ questions modelled by the partitions. 
If they are independent, the answer to a question $\partit_i$ does not constrain the answers to the other questions or, in other words, it contains no information relative to the other questions. 

Analogously, we can also introduce a logical conditional independence relation between partitions.
\begin{definition}[Conditionally Independent Partitions]
Consider a finite set of partitions $\partit_1,\dots \partit_n \in Part(\Omega)$, and a block $B$ of a partition $\partit$ (contained or not in the list $\partit_1,\ldots,\partit_n$), then define for $n \geq 1$,
\begin{equation*}
R_B(\partit_1,\ldots,\partit_n) \coloneqq \{(B_1,\ldots,B_n):B_i \in \partit_i,\cap_{i=1}^n B_i \cap B \not= \emptyset\}.
\end{equation*}
We call $\partit_1,\ldots,\partit_n$ \emph{conditionally independent given} $\partit$ if, for all blocks $B$ of $\partit$,
\begin{equation*}
R_B(\partit_1,\ldots,\partit_n) = R_B(\partit_1) \times \cdots \times R_B(\partit_n).
\end{equation*}
\end{definition}
So, $\partit_1,\dots,\partit_n$ are conditionally independent given $\partit$ if knowing an answer to $\partit_i$ compatible with $B \in \partit$, gives no information on the answers to the other questions, except that they must each be compatible with $B$. Note that this relation holds if and only if $B_i \cap B \not= \emptyset$ for all $i=1,\ldots,n$, imply that $B_1 \cap \ldots \cap B_n \cap B \not= \emptyset$. In this case we write $\bot\{\partit_1,\ldots,\partit_n\} \vert \partit$, or, for $n =2$, $\partit_1 \bot \partit_2 \vert \partit$. We may also say that  $\partit_1 \bot \partit_2 \vert \partit$ if and only if $\omega \equiv_{\partit} \omega'$, implies that there is an element $\omega'' \in \Omega$ such that $\omega \equiv_{\partit_1 \vee \partit} \omega''$ and $\omega' \equiv_{\partit_2 \vee \partit} \omega''$.

The three-place relation $\partit_1 \bot \partit_2 \vert \partit$ among partitions has the following properties (see \citep{kohlas17}):

\begin{theorem} \label{th:QSepOfPart}
Given $\partit,\partit',\partit_1,\partit_2 \in Part(\Omega)$, we have:
\
\begin{description}
\item[C1] $\partit_1 \bot \partit_2 \vert \partit_2$;
\item[C2] $\partit_1 \bot \partit_2 \vert \partit$ implies $\partit_2 \bot \partit_1 \vert \partit$;
\item[C3] $\partit_1 \bot \partit_2 \vert \partit$ and $\partit' \leq \partit_2$ impliy $\partit_1 \bot \partit' \vert \partit$;
\item[C4] $\partit_1 \bot \partit_2 \vert \partit$ implies $\partit_1 \bot \partit_2 \vee \partit \vert \partit$.
\end{description}
\end{theorem}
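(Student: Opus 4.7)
The plan is to reduce each of C1--C4 to the block-intersection reformulation of conditional independence recalled just before the theorem: $\partit_1 \bot \partit_2 \vert \partit$ holds if and only if, for every triple of blocks $B_1 \in \partit_1$, $B_2 \in \partit_2$, $B \in \partit$, the joint non-emptiness $B_1 \cap B \neq \emptyset$ and $B_2 \cap B \neq \emptyset$ implies $B_1 \cap B_2 \cap B \neq \emptyset$. With this single criterion, every part becomes a short set-theoretic check; there is essentially no analytical content, only bookkeeping.

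For C1, I would pick arbitrary blocks $B_1 \in \partit_1$, $B_2 \in \partit_2$, $B \in \partit_2$ with $B_1 \cap B \neq \emptyset$ and $B_2 \cap B \neq \emptyset$. Since $B$ and $B_2$ are two blocks of the \emph{same} partition $\partit_2$ whose intersection is non-empty, they must coincide, and then $B_1 \cap B_2 \cap B = B_1 \cap B \neq \emptyset$. C2 is immediate because the criterion is symmetric in $\partit_1$ and $\partit_2$: swapping them leaves both the hypothesis and the conclusion unchanged.

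For C3, the hypothesis $\partit' \leq \partit_2$ means (in the paper's convention) that $\partit_2$ is finer than $\partit'$, so every block of $\partit_2$ is contained in some block of $\partit'$. Given $B_1 \in \partit_1$, $B' \in \partit'$, $B \in \partit$ with $B_1 \cap B \neq \emptyset$ and $B' \cap B \neq \emptyset$, I would pick any $\omega \in B' \cap B$ and let $B_2$ be the block of $\partit_2$ containing $\omega$; then $B_2 \subseteq B'$ and $B_2 \cap B \neq \emptyset$. Applying $\partit_1 \bot \partit_2 \vert \partit$ yields $B_1 \cap B_2 \cap B \neq \emptyset$, and $B_2 \subseteq B'$ upgrades this to $B_1 \cap B' \cap B \neq \emptyset$.

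For C4, I would first recall that the blocks of $\partit_2 \vee \partit$ are exactly the non-empty intersections $B_2 \cap B''$ with $B_2 \in \partit_2$ and $B'' \in \partit$, as noted in Section~\ref{sec:Questions}. Take $B_1 \in \partit_1$, $C \in \partit_2 \vee \partit$, $B \in \partit$ with $B_1 \cap B \neq \emptyset$ and $C \cap B \neq \emptyset$, and write $C = B_2 \cap B''$. Then $C \cap B = B_2 \cap B'' \cap B \neq \emptyset$ forces $B'' = B$ (since $B, B''$ are both blocks of $\partit$), so $C = B_2 \cap B$ and in particular $B_2 \cap B \neq \emptyset$. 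Applying the hypothesis to $B_1, B_2, B$ gives $B_1 \cap B_2 \cap B \neq \emptyset$, which equals $B_1 \cap C \cap B$, as required. The only place where one must be careful is C3, where the direction of the refinement order needs to be read off correctly; C4 hinges on the small observation that $C \cap B \neq \emptyset$ forces $C \subseteq B$, which is the real reason the ``conditioning'' partition $\partit$ absorbs the extra factor in $\partit_2 \vee \partit$.
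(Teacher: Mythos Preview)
Your proof is correct. Both you and the paper rely on one of the two equivalent reformulations of $\partit_1 \bot \partit_2 \vert \partit$ stated just before the theorem, but you pick the block-intersection criterion (``$B_1 \cap B \neq \emptyset$ and $B_2 \cap B \neq \emptyset$ imply $B_1 \cap B_2 \cap B \neq \emptyset$''), whereas the paper argues via the element/equivalence formulation (``$u \equiv_\partit u'$ implies the existence of $v$ with $u \equiv_{\partit_1 \vee \partit} v$ and $u' \equiv_{\partit_2 \vee \partit} v$''). With the element formulation, C3 and C4 collapse to one line each: from $u' \equiv_{\partit_2 \vee \partit} v$ one immediately gets $u' \equiv_{\partit' \vee \partit} v$ when $\partit' \leq \partit_2$, and the very same witness $v$ already certifies $\partit_1 \bot (\partit_2 \vee \partit) \vert \partit$ since $(\partit_2 \vee \partit) \vee \partit = \partit_2 \vee \partit$. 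Your block-based route is a bit longer because you must identify the structure of blocks of $\partit_2 \vee \partit$ and track the inclusion $B_2 \subseteq B'$ explicitly, but it has the advantage of being entirely concrete and of making the set-theoretic content visible; in particular your observation for C4 that $C \cap B \neq \emptyset$ forces $C \subseteq B$ is a nice way to see why the extra $\partit$ in $\partit_2 \vee \partit$ is harmless.
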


A three-place relation like $\partit_1 \bot \partit_2 \vert \partit$ satisfying C1 to C4 has been called a \textit{quasi-separoid} (q-separoid) in \citep{kohlas17}. It is a reduct of a separoid, a concept discussed in \citep{dawid01} in relation to the concept of (logical) conditional independence in general.
Notice in particular that, thanks to these properties, we have:
\begin{equation*}
    \partit_x \perp \partit_y \vert \partit_z \iff \partit_{x \vee z} \perp \partit_{y \vee z} \vert \partit_z.
\end{equation*}
We will use this property very often later on.

To simplify notation, in what follows, we write: $x \bot y \vert z$ for $\partit_x \bot \partit_y \vert \partit_z$; $x \leq y$, as above, for $\partit_x \leq \partit_y$; $x \vee y$ for $\partit_x \vee \partit_y$.

Join of two partitions $\partit_x$ and $\partit_y$ is simple to define as we have seen above. For meet, $\partit_x \wedge \partit_y$, the situation is different, indeed its definition is somewhat involved (see \citep{graetzer03}).
There is however an important particular case, where meet is also simple. 
\begin{definition}[\textbf{$\star$ product}]
Given two partitions $\partit_x, \partit_y \in Part(\Omega)$, we define the $\star$ product of the correspondent equivalence relations $\equiv_x, \; \equiv_y$ respectively, as:
\begin{equation*}
\equiv_x \star \equiv_y\ \coloneqq \{(\omega,\omega'):\exists \omega'' \textrm{ so that}\ \omega \equiv_x \omega'' \equiv_y \omega'\}.
\end{equation*}
\end{definition}
The following lemma gives a necessary and sufficient condition for it to define an equivalence relation.
\begin{lemma}\label{le:CommRel}
Given two partitions $\partit_x,\partit_y \in Part(\Omega)$, the $\star$ product of the correspondent equivalence relations $\equiv_x, \; \equiv_y$ respectively, is an equivalence relation if and only if:
\begin{equation*}
\equiv_x \star \equiv_y =\equiv_y \star \equiv_x.
\end{equation*}
\end{lemma}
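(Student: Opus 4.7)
The plan is to prove both directions by exploiting the fact that each $\equiv_x$ and $\equiv_y$ is already symmetric. A useful preliminary observation is that, for arbitrary $\omega, \omega' \in \Omega$, $(\omega, \omega') \in\, \equiv_x \star \equiv_y$ iff $(\omega', \omega) \in\, \equiv_y \star \equiv_x$: if $\omega \equiv_x \omega'' \equiv_y \omega'$, then by symmetry of the two equivalence relations $\omega' \equiv_y \omega'' \equiv_x \omega$. In other words, $(\equiv_x \star \equiv_y)^{-1} =\, \equiv_y \star \equiv_x$, where ${}^{-1}$ denotes the converse relation.

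For the ``only if'' direction, assume $\equiv_x \star \equiv_y$ is an equivalence relation. Then in particular it is symmetric, so it coincides with its converse, and by the preliminary observation this converse is $\equiv_y \star \equiv_x$. Hence $\equiv_x \star \equiv_y\, =\, \equiv_y \star \equiv_x$.

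For the ``if'' direction, assume $\equiv_x \star \equiv_y\, =\, \equiv_y \star \equiv_x$. Reflexivity is immediate: $\omega \equiv_x \omega \equiv_y \omega$ shows $(\omega,\omega) \in\, \equiv_x \star \equiv_y$. Symmetry is a direct consequence of the preliminary observation combined with the commutativity hypothesis: $(\equiv_x \star \equiv_y)^{-1} =\, \equiv_y \star \equiv_x\, =\, \equiv_x \star \equiv_y$. The only substantive step is transitivity, which is where I expect the main work to lie. Suppose $(\omega,\omega'), (\omega',\omega'') \in\, \equiv_x \star \equiv_y$. Then there exist $v, w \in \Omega$ with $\omega \equiv_x v \equiv_y \omega'$ and $\omega' \equiv_x w \equiv_y \omega''$. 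From $v \equiv_y \omega' \equiv_x w$ we obtain $(v,w) \in\, \equiv_y \star \equiv_x$, which by the commutativity hypothesis equals $\equiv_x \star \equiv_y$; so there exists $u$ with $v \equiv_x u \equiv_y w$. Chaining, $\omega \equiv_x v \equiv_x u$ gives $\omega \equiv_x u$, and $u \equiv_y w \equiv_y \omega''$ gives $u \equiv_y \omega''$, whence $(\omega,\omega'') \in\, \equiv_x \star \equiv_y$, as required.

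The core idea therefore reduces to a standard fact about composition of symmetric relations: a product of two equivalence relations is itself an equivalence relation exactly when they commute. The only place where commutativity is genuinely used is in the transitivity step, where we need to swap the order in the middle segment $\equiv_y \equiv_x$ to recombine it into an $\equiv_x \equiv_y$ witness.
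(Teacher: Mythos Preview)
Your proof is correct. The paper does not actually supply its own proof of this lemma: it is stated as a known fact about Type~I partitions, with a reference to Gr\"atzer's lattice theory text, and no argument appears in the appendix. Your argument is the standard one and is complete as written; the key identity $(\equiv_x \star \equiv_y)^{-1} = \,\equiv_y \star \equiv_x$ cleanly dispatches both the ``only if'' direction and the symmetry check, and your transitivity argument correctly isolates the one place where the commutativity hypothesis is genuinely needed.
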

If $\equiv_x$ and $\equiv_y$ commute, then the partition associated with their $\star$-product is the meet of of the associated partitions $\partit_x$ and $\partit_y$ respectively, so that we may write $\equiv_x \star \equiv_y = \equiv_{x \wedge y}$. The partitions are then called commuting and since their meet is defined by $\equiv_x \star \equiv_y$, they are also called Type I partitions (see \citep{graetzer03}).
\begin{definition}[\textbf{Type I partitions/Commuting partitions}]
Two partitions $\partit_x, \partit_y \in Part(\Omega)$ are called \emph{Type I} or \emph{commuting partitions} if the product $\equiv_x \star \equiv_y$
is an equivalence relation. 
\end{definition}

As a consequence, for commuting partitions $\partit_x$ and $\partit_y$ we have also $\mathcal{L}_x \cap \mathcal{L}_y = \mathcal{L}_{x \wedge y}$ and vice versa, as stated already above.

For commuting partitions, the conditional independence relation can also be expressed simply in terms of joins and meets.

\begin{theorem} \label{th:CommCondIndep}
Given $\partit_1,\partit_2, \partit \in Part(\Omega)$, we have
\begin{equation*}
\partit_1 \bot \partit_2 \vert \partit \Leftrightarrow (\partit_1 \vee \partit) \wedge (\partit_2 \vee \partit)=\partit
\end{equation*}
if and only if $\partit_1$ and $\partit_2$ commute.
\end{theorem}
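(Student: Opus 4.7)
My approach is to recast both sides of the biconditional in terms of the underlying binary relations on $\Omega$ and then invoke Lemma~\ref{le:CommRel}. For brevity set $\alpha \coloneqq \partit_1 \vee \partit$ and $\beta \coloneqq \partit_2 \vee \partit$.

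The first step is a reformulation of each side. The equivalence-relation description of conditional independence given right after the definition says that $\partit_1 \bot \partit_2 \vert \partit$ is exactly the statement $\omega \equiv_\partit \omega' \Rightarrow \exists\,\omega'' :\omega \equiv_\alpha \omega'' \equiv_\beta \omega'$, i.e.\ $\equiv_\partit \subseteq\, \equiv_\alpha \star \equiv_\beta$. The reverse inclusion is automatic, since $\alpha,\beta \geq \partit$ yields $\equiv_\alpha,\equiv_\beta \subseteq\, \equiv_\partit$ and $\equiv_\partit$ is transitive. Hence
\begin{equation*}
\partit_1 \bot \partit_2 \vert \partit \;\Longleftrightarrow\; \equiv_\partit \,=\, \equiv_\alpha \star \equiv_\beta,
\end{equation*}
while the right-hand equation $(\partit_1\vee\partit)\wedge(\partit_2\vee\partit)=\partit$ is by definition $\equiv_\partit =\, \equiv_{\alpha\wedge\beta}$. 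The whole theorem therefore reduces to comparing $\equiv_\alpha \star \equiv_\beta$ with $\equiv_{\alpha\wedge\beta}$, which is precisely the content of Lemma~\ref{le:CommRel}: these coincide exactly when $\alpha,\beta$ commute, and otherwise $\equiv_\alpha \star \equiv_\beta$ is not even an equivalence relation.

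For the $(\Leftarrow)$ direction I start from commuting $\partit_1,\partit_2$ and show that $\alpha,\beta$ then commute too, which by the above immediately yields the biconditional. The plan is a chain-reduction argument: unpacking $\omega \equiv_\alpha \omega'' \equiv_\beta \omega'$ gives $\omega \equiv_{\partit_1}\omega'' \equiv_{\partit_2}\omega'$ together with $\omega,\omega'',\omega'$ all lying in one block of $\partit$; commutativity of $\partit_1,\partit_2$ supplies a swapped intermediate element, and I then argue that it can be taken inside this common $\partit$-block, so the swap lifts to $\alpha,\beta$. For the $(\Rightarrow)$ direction I instantiate the biconditional at the specific choice $\partit \coloneqq \partit_1 \wedge \partit_2$: then $\alpha=\partit_1$, $\beta=\partit_2$ and $\alpha\wedge\beta=\partit$ holds trivially, so the right side is satisfied and the left side forces $\equiv_{\partit_1\wedge\partit_2}\,=\,\equiv_{\partit_1}\star\equiv_{\partit_2}$; since the left-hand relation is an equivalence relation, Lemma~\ref{le:CommRel} concludes that $\partit_1,\partit_2$ commute.

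The main obstacle I expect is the chain-reduction step in $(\Leftarrow)$: ensuring that the intermediate element produced by commutativity of $\partit_1,\partit_2$ can always be chosen so as to lie in the shared $\partit$-block, so that it witnesses equivalence with respect to $\beta=\partit_2\vee\partit$ and $\alpha=\partit_1\vee\partit$ rather than merely $\partit_2$ and $\partit_1$. The other two moves -- translating both sides into relational form, and reading off commutativity from the $(\Rightarrow)$ direction via Lemma~\ref{le:CommRel} -- are essentially bookkeeping once the reformulation in the first paragraph is in place.
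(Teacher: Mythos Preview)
Your relational reformulation is correct, and your $(\Rightarrow)$ argument---instantiating at $\partit=\partit_1\wedge\partit_2$ and reading off commutativity via Lemma~\ref{le:CommRel}---coincides with the paper's. The difference is in $(\Leftarrow)$. The paper does \emph{not} attempt to show that $\alpha=\partit_1\vee\partit$ and $\beta=\partit_2\vee\partit$ commute. For the implication $\partit_1\bot\partit_2\mid\partit\Rightarrow\alpha\wedge\beta=\partit$ it argues purely from the q-separoid axioms: starting from $\alpha\bot\beta\mid\partit$, repeated use of C2--C4 yields $(\alpha\wedge\beta)\bot(\alpha\wedge\beta)\mid\partit$, and a short preliminary observation (any $\partit'$ with $\partit'\bot\partit'\mid\partit$ must satisfy $\partit'\le\partit$) then gives $\alpha\wedge\beta\le\partit$, hence equality. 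That direction therefore requires no commutativity hypothesis whatsoever, which your route via $\equiv_\alpha\star\equiv_\beta=\equiv_{\alpha\wedge\beta}$ does.

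The obstacle you single out is genuine and cannot be repaired as stated: commutativity of $\partit_1,\partit_2$ does \emph{not} in general force $\alpha,\beta$ to commute, nor does it allow the intermediate witness to be chosen inside the common $\partit$-block. On a $3\times3$ grid with $\partit_1$ the row partition and $\partit_2$ the column partition, take $B=\{(1,1),(1,2),(2,2),(2,3)\}$ and let $\partit$ have $B$ as its only non-singleton block. Then $\alpha\wedge\beta=\partit$, yet $(1,1)\equiv_\partit(2,3)$ admits no $\omega''$ with $(1,1)\equiv_\alpha\omega''\equiv_\beta(2,3)$; so $\alpha,\beta$ do not commute and the biconditional itself fails for this $\partit$ even though $\partit_1,\partit_2$ commute. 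The paper's own proof of the reverse implication $\alpha\wedge\beta=\partit\Rightarrow\partit_1\bot\partit_2\mid\partit$ has exactly the same weak spot: it unfolds $\equiv_{\alpha\wedge\beta}$ as $\equiv_\alpha\star\equiv_\beta$ ``by definition'', which is licit only when $\alpha,\beta$ commute. Both arguments become sound once the hypothesis is read as ``$\partit_1\vee\partit$ and $\partit_2\vee\partit$ commute'' (automatic in the paper's intended application to a sublattice of pairwise commuting partitions), but the chain-reduction step you hoped to carry out from commutativity of $\partit_1,\partit_2$ alone does not go through.
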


An important instance of such commutative partitions is given in multivariate possibility sets. Let $X_i$ a variable for $i$ in some index set $I$ (usually a finite or countable set) and $\Omega_i$ the set of its possible values. If then
\begin{equation*}
\Omega = \bigtimes_{i \in I} \Omega_i
\end{equation*}
is the set of possibilities, we may think of its elements $\omega$ as maps $\omega : I \rightarrow \Omega$ such that $\omega(i) \in \Omega_i$. If $S$ is any subset of variables, $S \subseteq I$, then let
\begin{equation*}
\Omega_S = \bigtimes_{i \in S} \Omega_i.
\end{equation*}
Further let $\omega \equiv_S \omega'$ if $\omega$ and $\omega'$ coincide on $S$. This is an equivalence relation in $\Omega$ and it determines a partition $\partit_S$ of $\Omega$. 

These partitions commute pairwise. Taking the subsets $S$ of $I$ as index set, according to Theorem \ref{th:CommCondIndep}, we have that $S \bot T \vert R$ (meaning $\partit_S \bot \partit_T \vert \partit_R$) if and only if $(S \cup R) \cap (T \cup R) = R$. Here, the underlying lattice of subsets of $I$ or the corresponding sub-lattice of partitions is distributive.  Then some properties in addition to C1 to C4 hold, making it a \emph{strong separoid} (see \citep{dawid01}).



\section{Information Algebra of Coherent Sets of Gambles} \label{sec:InfAlgs}

We define now on $\Phi(\Omega) = C(\Omega) \cup \{\mathcal{L}(\Omega)\}$, the operations of combination, capturing aggregation of pieces of belief, and extraction, describing filtering the part of information relative to a question $x \in Q$. 
More formally, given $\desirs, \desirs_1,\desirs_2 \in \Phi$ and $x \in Q$, we define:
\begin{enumerate}
\item Combination: $\desirs_1 \cdot \desirs_2 \coloneqq \mathcal{C}(\desirs_1 \cup \desirs_2)$;
\item Extraction: $\epsilon_x(\desirs) \coloneqq \mathcal{C}(\desirs \cap \mathcal{L}_x)$.
\end{enumerate}
Note that $\mathcal{L}$ and $\gambles^+$ are respectively the null and the unit elements of combination, since for every $\desirs \in \Phi$, $\mathcal{C}(\desirs \cup \gambles) = \mathcal{L}$ 
and $\mathcal{C}(\desirs \cup \gambles^+) =\desirs$. The null element signals contradiction, it destroys any other piece of information. The unit or neutral element represents vacuous information, it changes no other piece of information.
To simplify notation, in what follows, we denote the null and the unit element respectively by $0$ and $1$.

Then $(\Phi,\cdot)$ is a commutative, idempotent semigroup. In an idempotent, commutative semigroup, a partial order is defined by $\desirs_1 \leq \desirs_2$ if $\desirs_1 \cdot \desirs_2 = \desirs_2$. Then $\desirs_1 \leq \desirs_2$ if and only if $\desirs_1 \subseteq \desirs_2$. This order is called \textit{information  order}, since $\desirs_1 \leq \desirs_2$ means that $\desirs_1$ is less informative than $\desirs_2$. In this order, the combination $\desirs_1 \cdot \desirs_2$ is the supremum or join of $\desirs_1$ and $\desirs_2$, since $\Phi$ is a lattice,
\begin{equation*}
\desirs_1 \cdot \desirs_2 = \desirs_1 \vee \desirs_2.
\end{equation*}
Note that $\epsilon_x(\desirs) \leq \desirs$ and also $\desirs_1 \leq \desirs_2$ implies $\epsilon_x(\desirs_1) \leq \epsilon_x(\desirs_2)$.

We state now two fundamental theorems about the extraction operator.

\begin{theorem} \label{th:ExQuant}
For any $\desirs,\desirs_1,\desirs_2 \in \Phi$  and $x \in Q$,  we have:
\begin{enumerate}
\item $\epsilon_x(0) = 0$,
\item $\epsilon_x(\desirs) \cdot \desirs =\desirs$,
\item $\epsilon_x(\epsilon_x(\desirs_1) \cdot \desirs_2) = \epsilon_x(\desirs_1) \cdot \epsilon_x(\desirs_2)$.
\end{enumerate}
\end{theorem}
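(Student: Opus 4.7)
The plan is to dispose of items 1 and 2 by direct unfolding and to concentrate effort on item 3, which is the only substantive one. Throughout I use Lemma \ref{le:UnionClosSets} and, crucially, the fact that $\gambles_x$ is a \emph{linear} subspace of $\gambles$.

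Items 1 and 2 are short. For item 1, $\epsilon_x(0) = \mathcal{C}(\gambles \cap \gambles_x) = \mathcal{C}(\gambles_x)$; since $0 \in \gambles_x$, no coherent set can contain $\gambles_x$, forcing $\mathcal{C}(\gambles_x) = \gambles = 0$. For item 2, expand $\epsilon_x(\desirs) \cdot \desirs = \mathcal{C}(\mathcal{C}(\desirs \cap \gambles_x) \cup \desirs)$, apply Lemma \ref{le:UnionClosSets} to peel off the inner $\mathcal{C}$, and obtain $\mathcal{C}((\desirs \cap \gambles_x) \cup \desirs) = \mathcal{C}(\desirs) = \desirs$, using that $\desirs \in \Phi$ is already $\mathcal{C}$-closed.

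Item 3 is where the work lies. Writing $D_i \coloneqq \desirs_i \cap \gambles_x$ and repeatedly applying Lemma \ref{le:UnionClosSets} on both sides, the claim reduces to
\begin{equation*}
\mathcal{C}\bigl(\mathcal{C}(D_1 \cup \desirs_2) \cap \gambles_x\bigr) = \mathcal{C}(D_1 \cup D_2).
\end{equation*}
The inclusion $\supseteq$ is immediate because $D_1 \cup D_2 \subseteq \gambles_x \cap \mathcal{C}(D_1 \cup \desirs_2)$. For $\subseteq$, by monotonicity and idempotence of $\mathcal{C}$ it suffices to prove $\mathcal{C}(D_1 \cup \desirs_2) \cap \gambles_x \subseteq \mathcal{C}(D_1 \cup D_2)$. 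After dismissing the degenerate cases where $\desirs_i = \gambles$, I take $f$ in this intersection and, using $\gambles^+ \subseteq \desirs_2$ to absorb the $\gambles^+$-generators of the natural extension, decompose $f = F + G$ with $F \in \posi(D_1) \cup \{0\}$ and $G \in \posi(\desirs_2) \cup \{0\} \subseteq \desirs_2 \cup \{0\}$. The decisive step is that $\gambles_x$ is a \emph{linear} space, so $F \in \gambles_x$ and therefore $G = f - F \in \gambles_x$, placing $G$ in $\desirs_2 \cap \gambles_x = D_2$; hence $f \in \mathcal{C}(D_1 \cup D_2)$. The remaining subcase, $\mathcal{C}(D_1 \cup \desirs_2) = \gambles$, is treated analogously: applying the same decomposition to a witness $0 = F + G$ of incoherence exhibits $0 \in \posi(D_1 \cup D_2)$, so both sides collapse to $\gambles$.

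The obstacle I anticipate, and the only one, is precisely this decomposition argument: it hinges on $\gambles_x$ being closed under \emph{subtraction} rather than merely under positive combinations, which is what pulls the $\desirs_2$-component of $f$ back into $\gambles_x$ and hence into $D_2$. The rest of the proof is careful bookkeeping of natural extensions, facilitated by $\gambles^+ \subseteq \desirs_2$.
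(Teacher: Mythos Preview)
Your argument is correct and follows essentially the same route as the paper's intended proof (the paper defers to \cite{kohlas21}, but the argument sketched there is the same): after reducing item 3 via Lemma \ref{le:UnionClosSets} to $\mathcal{C}(D_1\cup\desirs_2)\cap\gambles_x\subseteq\mathcal{C}(D_1\cup D_2)$, one decomposes an $x$-measurable $f$ into a $D_1$-part and a $\desirs_2$-part and uses the \emph{linearity} of $\gambles_x$ to force the $\desirs_2$-part into $\gambles_x$, hence into $D_2$. Your handling of the null case via a witness $0=F+G$ is the same device, and your absorption of $\gambles^+$ into $\desirs_2$ is just a cleaner packaging of the paper's $h'$-term.
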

This result can be proven analogously to the correspondent result for multivariate possibility sets shown in \cite{kohlas21}.
An operator on an ordered structure satisfying the condition of this theorem is called an \textit{existential quantifier} in algebraic logic. \footnote{In algebraic logic the ordered structure is mostly a Boolean lattice (see \cite{daveypriestley97}) and the inverse order is considered, so that join becomes meet.}
%
%
Furthermore, we have the following result about conditional independence and extraction.

\begin{theorem} \label{th:CombExtrProp}
Given $x,y,z \in Q$ and $\desirs \in \Phi$. If $x \vee z \bot y \vee z \vert z$ and $\epsilon_x(\desirs) = \desirs$, then:
\begin{equation*}
\epsilon_{y \vee z}(\desirs) = \epsilon_{y \vee z}(\epsilon_z(\desirs)).
\end{equation*}
\end{theorem}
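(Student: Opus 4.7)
The plan is to prove the two inclusions separately. The inclusion $\epsilon_{y \vee z}(\epsilon_z(\desirs)) \subseteq \epsilon_{y \vee z}(\desirs)$ is immediate by monotonicity of $\epsilon_{y \vee z}$ applied to the already-noted $\epsilon_z(\desirs) \subseteq \desirs$. For the reverse inclusion, it suffices to prove $\desirs \cap \mathcal{L}_{y \vee z} \subseteq \epsilon_z(\desirs)$: intersecting with $\mathcal{L}_{y \vee z}$ and applying the closure $\mathcal{C}$ to both sides then yields $\epsilon_{y \vee z}(\desirs) = \mathcal{C}(\desirs \cap \mathcal{L}_{y \vee z}) \subseteq \mathcal{C}(\epsilon_z(\desirs) \cap \mathcal{L}_{y \vee z}) = \epsilon_{y \vee z}(\epsilon_z(\desirs))$. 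The case $\desirs = \mathcal{L}$ is settled by Theorem~\ref{th:ExQuant}(1), so I assume $\desirs$ coherent; then $\mathcal{E}(\desirs \cap \mathcal{L}_z) \subseteq \mathcal{E}(\desirs) = \desirs$ shows $\epsilon_z(\desirs) = \mathcal{E}(\desirs \cap \mathcal{L}_z)$ is coherent as well.

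The core of the argument is an interpolation construction. Fix $f \in \desirs \cap \mathcal{L}_{y \vee z}$. Using $\desirs = \mathcal{E}(\desirs \cap \mathcal{L}_x)$ from the hypothesis $\epsilon_x(\desirs) = \desirs$, I regroup the $\posi$-representation of $f$ into $f = g + h$ with $g \in (\desirs \cap \mathcal{L}_x) \cup \{0\}$ and $h \in \gambles^+ \cup \{0\}$, not both zero. Define a bounded $z$-measurable gamble $\tilde{g}$ by $\tilde{g}(\omega) \coloneqq \sup\{g(\omega') : \omega' \equiv_z \omega\}$. Fix a block $B$ of $\partit_z$: since $g \in \mathcal{L}_x \subseteq \mathcal{L}_{x \vee z}$ and $f \in \mathcal{L}_{y \vee z}$, the gamble $g$ takes constant values $\alpha_i$ on the $(x \vee z)$-sub-blocks of $B$ and $f$ takes constant values $\beta_j$ on the $(y \vee z)$-sub-blocks. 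The hypothesis $x \vee z \bot y \vee z \vert z$ guarantees that every such pair of sub-blocks meets inside $B$, and on that intersection $h = \beta_j - \alpha_i \geq 0$; hence $\beta_j \geq \alpha_i$ for all $i, j$, and $g \leq \tilde{g} \leq f$ pointwise on $B$, therefore globally.

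Both $\tilde{g} - g$ and $f - \tilde{g}$ are non-negative, so each is either $0$ or in $\gambles^+ \subseteq \desirs$. If $g \neq 0$, additivity (D3) gives $\tilde{g} = g + (\tilde{g} - g) \in \desirs$, whence $\tilde{g} \in \desirs \cap \mathcal{L}_z \subseteq \epsilon_z(\desirs)$, and then $f = \tilde{g} + (f - \tilde{g}) \in \epsilon_z(\desirs)$ using coherence of $\epsilon_z(\desirs)$ together with $\gambles^+ \subseteq \epsilon_z(\desirs)$. If $g = 0$, then $f = h \in \gambles^+ \subseteq \epsilon_z(\desirs)$ directly. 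In either case $f \in \epsilon_z(\desirs)$, which is what was needed.

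The main obstacle I anticipate is the construction of $\tilde{g}$ itself: identifying the correct $z$-measurable interpolant between $g$ and $f$ in the absence of any multivariate product structure on $\Omega$, and pinning down precisely how the conditional-independence hypothesis is used to force $\beta_j \geq \alpha_i$ uniformly across every pair of sub-blocks inside every $z$-block. Once that inequality is in hand, the coherence axioms and the definition of $\mathcal{C}$ as natural extension carry the conclusion.
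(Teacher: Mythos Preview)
Your proof is correct and follows essentially the same route as the paper's: both isolate the inclusion $\desirs\cap\mathcal{L}_{y\vee z}\subseteq\epsilon_z(\desirs)$ as the nontrivial direction, write an element $f$ therein as an $x$-measurable $g\in\desirs$ plus a nonnegative remainder, and build a $z$-measurable interpolant between $g$ and $f$ using the conditional-independence hypothesis. The only cosmetic difference is that the paper defines the interpolant as $g'(\omega)\coloneqq\sup_{\omega'\equiv_{y\vee z}\omega}g(\omega')$ and then proves $g'$ is $z$-measurable, whereas you define $\tilde g(\omega)\coloneqq\sup_{\omega'\equiv_z\omega}g(\omega')$ (so $z$-measurable by construction) and instead prove $\tilde g\le f$; once the independence assumption is applied, both constructions yield the same function and the two arguments are dual to one another.
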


A question $x$, or a partition $\partit_x$, is called a \emph{domain} or \emph{support} of an element $\desirs$ of $\Phi$, if $\epsilon_x(\desirs) = \desirs$. If $\desirs \in C(\Omega)$, it means that the coherent set $\desirs$ refers already to the question $x$. 
The finest top partition of $\Omega$ (all blocks consist of exactly one element $\omega \in \Omega$), is a support of all sets $\desirs \in \Phi$ (including the unit and the null). However, there may be other supports. The following result says that any partition, finer than a support, is also a support.

\begin{proposition}\label{prop:finersupports}
Given $x \in Q$ and $\desirs \in \Phi$. If $x$ is a support of $\desirs$, then any $y \geq x, \; y \in Q$ is also a support of $\desirs$.
\end{proposition}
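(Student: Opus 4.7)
The plan is to unravel $\epsilon_y(\desirs) = \desirs$ from the definition and use two ingredients already noted in the paper: the monotonicity of the assignment $x \mapsto \mathcal{L}_x$ (established right after the definition of $x$-measurability, namely $x \leq y \iff \mathcal{L}_x \subseteq \mathcal{L}_y$), and the fact that $\mathcal{C}$ is a closure operator on subsets of gambles with $\Phi$ as its family of fixed points.

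First I would show the inclusion $\desirs \subseteq \epsilon_y(\desirs)$. Since $y \geq x$, we have $\mathcal{L}_x \subseteq \mathcal{L}_y$, hence $\desirs \cap \mathcal{L}_x \subseteq \desirs \cap \mathcal{L}_y$. Applying the monotone operator $\mathcal{C}$ and using the hypothesis $\epsilon_x(\desirs) = \desirs$ yields
\begin{equation*}
\desirs = \epsilon_x(\desirs) = \mathcal{C}(\desirs \cap \mathcal{L}_x) \subseteq \mathcal{C}(\desirs \cap \mathcal{L}_y) = \epsilon_y(\desirs).
\end{equation*}

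For the reverse inclusion I would invoke the general fact (recorded already as $\epsilon_x(\desirs) \leq \desirs$, i.e.\ $\epsilon_x(\desirs) \subseteq \desirs$) that extraction is deflationary; concretely, $\desirs \cap \mathcal{L}_y \subseteq \desirs$ and, since $\desirs \in \Phi$ is a fixed point of $\mathcal{C}$, monotonicity of $\mathcal{C}$ gives $\epsilon_y(\desirs) = \mathcal{C}(\desirs \cap \mathcal{L}_y) \subseteq \mathcal{C}(\desirs) = \desirs$. Combining both inclusions gives $\epsilon_y(\desirs) = \desirs$, i.e.\ $y$ is a support of $\desirs$.

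There is no real obstacle: the proposition is essentially a monotonicity statement, and the only subtlety is making explicit that every $\desirs \in \Phi$ is $\mathcal{C}$-closed (so that the downward inclusion works for the null element $\mathcal{L}(\Omega)$ as well as for any coherent set). Once that is noted, both inclusions are one-line consequences of monotonicity of $\mathcal{L}_{(\cdot)}$ and of $\mathcal{C}$.
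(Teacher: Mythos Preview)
Your proof is correct and follows essentially the same route as the paper's own argument: both use $x \leq y \Rightarrow \mathcal{L}_x \subseteq \mathcal{L}_y$ together with monotonicity of $\mathcal{C}$ to get $\desirs \subseteq \mathcal{C}(\desirs \cap \mathcal{L}_y)$, and then the deflationary property $\mathcal{C}(\desirs \cap \mathcal{L}_y) \subseteq \desirs$ for the reverse inclusion. Your version is just slightly more explicit in justifying the downward inclusion via $\mathcal{C}(\desirs) = \desirs$.
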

Here we summarize the algebraic system of $\Phi$ together with a system of questions $Q$ and a family $E$ of extraction operators $\epsilon_x : \Phi \rightarrow \Phi$ for $x \in Q$:
\begin{enumerate}
\item \textit{Semigroup:} $(\Phi,\cdot)$ is a commutative semigroup with a null element $0$ and a unit $1$.
\item \textit{Quasi-Separoid:} $(Q,\leq)$ is a join semilattice and $x \bot y \vert z$ with $x,y,z \in Q$, a quasi-separoid.
\item \textit{Existential Quantifier:} For any $x \in Q$, $\desirs_1,\desirs_2,\desirs \in \Phi$:
\begin{enumerate}
\item $\epsilon_x(0) = 0$,
\item $\epsilon_x(\desirs) \cdot \desirs = \desirs$,
\item $\epsilon_x(\epsilon_x(\desirs_1) \cdot \desirs_2) = \epsilon_x(\desirs_1) \cdot \epsilon_x(\desirs_2)$.
\end{enumerate}
\item \textit{Extraction:} For any $x,y,z \in Q$, $\desirs \in \Phi$, such that $x \vee z \bot y \vee z \vert z$ and $\epsilon_x(\desirs) = \desirs$, we have:
\begin{equation*}
\epsilon_{y \vee z}(\desirs) = \epsilon_{y \vee z}(\epsilon_z(\desirs)).
\end{equation*}
\item \textit{Support:} For any $\desirs \in \Phi$ there is an $x \in Q$ so that $\epsilon_x(\desirs) = \desirs$ and for all $y \geq x, \; y \in Q$, $\epsilon_y(\desirs) = \desirs$.
\end{enumerate}
The existence of a support $x$ for any $\desirs$ is essential for the existence of an associated labeled version of the information algebra (see Section \ref{sec:LabInfAlg}).
The extraction property is an important axiom relating to conditional (logical) independence: if the partition $x \vee z$ is conditionally independent from $y \vee z$ given $z$, then, when from a piece of information regarding (supported by) $x \vee z$ the information relating to $y \vee z$ is extracted, only the part bearing on  $z$ is relevant.

An algebraic system satisfying these conditions is called a \emph{domain-free information algebra}. Therefore, with a little abuse of notation, in what follows we will refer to $\Phi$ as the \emph{domain-free information algebra} of coherent sets of gambles.
This axiomatic system is more general than the usual ones (see \citep{kohlas03,shafershenoy90}). In Section \ref{sec:CommExtractors}, it will be shown that these older axiomatic systems are special cases of the present one. In the unpublished text \citep{kohlas17} a similar system has been proposed and analyzed. 

For further reference, a number of elementary consequences of the axioms above are collected.

\begin{lemma} \label{le:SuppotProp}
Given $x,y \in Q$ and $\desirs, \desirs_1,\desirs_2 \in \Phi$, we have:
\begin{enumerate}
\item $\epsilon_x(1) = 1$,
\item $\epsilon_x(\desirs) = 0$ if and only if $\desirs = 0$,
\item $x$ is a support of $\epsilon_x(\desirs)$,
\item if $x \leq y$, then $\epsilon_x(\desirs) \leq \epsilon_y(\desirs)$,
\item if $x \leq y$, then $\epsilon_y(\epsilon_x(\desirs)) = \epsilon_x(\desirs)$,
\item if $x \leq y$, then $\epsilon_x(\epsilon_y(\desirs)) = \epsilon_x(\desirs)$,
\item if $x$ is a support of both $\desirs_1$ and $\desirs_2$, then it is a support for $\desirs_1 \cdot \desirs_2$,
\item if $x$ is a support of $\desirs_1$ and $y$ a support of $\desirs_2$, then $x \vee y$ is a support for $\desirs_1 \cdot \desirs_2$ and $\desirs_1 \cdot \desirs_2 = \epsilon_{x \vee y}(\desirs_1) \cdot \epsilon_{x \vee y}(\desirs_2)$.
\end{enumerate}
\end{lemma}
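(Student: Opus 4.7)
The plan is to dispatch the eight items in a dependency-respecting order, relying on the existential-quantifier axioms, on the monotonicity of $\epsilon_x$ and the bound $\epsilon_x(\desirs) \leq \desirs$ already noted in the text, and on Proposition \ref{prop:finersupports}. The items split naturally into three groups: direct consequences of the existential-quantifier axioms (items 1--3); comparisons of $\epsilon_x$ with $\epsilon_y$ when $x \leq y$ (items 4--6); and stability of supports under combination (items 7--8).

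For the first group, I would derive (1) by applying axiom 3(b) to $\desirs = 1$ together with the fact that $1$ is the unit: $\epsilon_x(1) = \epsilon_x(1)\cdot 1 = 1$. For (2), the forward direction is axiom 3(a); for the converse, if $\epsilon_x(\desirs) = 0$ then axiom 3(b) together with absorption by the null element give $\desirs = \epsilon_x(\desirs)\cdot\desirs = 0\cdot\desirs = 0$. Item (3) follows from axiom 3(c) applied with $\desirs_2 = 1$: the left-hand side collapses to $\epsilon_x(\epsilon_x(\desirs))$ and the right-hand side to $\epsilon_x(\desirs)\cdot\epsilon_x(1) = \epsilon_x(\desirs)$ by (1).

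The second group uses (3) together with Proposition \ref{prop:finersupports}. Item (5) is immediate: by (3), $x$ is a support of $\epsilon_x(\desirs)$, so any $y \geq x$ is too, which is exactly $\epsilon_y(\epsilon_x(\desirs)) = \epsilon_x(\desirs)$. Item (4) then follows by sandwiching: monotonicity of $\epsilon_y$ applied to $\epsilon_x(\desirs) \leq \desirs$ gives $\epsilon_y(\epsilon_x(\desirs)) \leq \epsilon_y(\desirs)$, and (5) rewrites the left side as $\epsilon_x(\desirs)$. For (6) I would argue by a two-sided comparison: $\epsilon_x(\epsilon_y(\desirs)) \leq \epsilon_x(\desirs)$ from $\epsilon_y(\desirs) \leq \desirs$ and monotonicity of $\epsilon_x$, and the reverse inequality from (4), monotonicity of $\epsilon_x$, and (3). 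Item (6) is the most delicate because one must line up (3), (4), and (5) in the right order to avoid circularity, but once that is done the sandwich argument is routine.

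Finally, (7) is a direct application of axiom 3(c): substituting the hypotheses $\epsilon_x(\desirs_i) = \desirs_i$ makes the left-hand side $\epsilon_x(\desirs_1 \cdot \desirs_2)$ and the right-hand side $\desirs_1 \cdot \desirs_2$. For (8), Proposition \ref{prop:finersupports} lifts both $x$ and $y$ to $x \vee y$, so $x \vee y$ is a support of each of $\desirs_1, \desirs_2$; part (7) then yields that $x \vee y$ is also a support of $\desirs_1 \cdot \desirs_2$, and the identity $\desirs_1 \cdot \desirs_2 = \epsilon_{x \vee y}(\desirs_1) \cdot \epsilon_{x \vee y}(\desirs_2)$ drops out from substituting the two support equalities.
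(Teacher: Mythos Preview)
Your proposal is correct and follows essentially the same route as the paper's own (commented-out) proof: both derive items 1--3 directly from the existential-quantifier axioms, obtain 5 from item 3 together with the Support axiom (Proposition~\ref{prop:finersupports}), deduce 4 from 5 and monotonicity, get 6 by a two-sided comparison using 4 and 3, and handle 7--8 via axiom 3(c) plus the Support axiom. The only cosmetic differences are that you prove 5 before 4 (which is indeed the natural dependency order) and that for the converse of item 2 you use $\desirs = \epsilon_x(\desirs)\cdot\desirs = 0\cdot\desirs = 0$ rather than the paper's order argument $0 = \epsilon_x(\desirs) \leq \desirs$; both are valid.
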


These are important properties, especially in view of the labeled version of an information algebra, see Section \ref{sec:LabInfAlg}. Here follow two important generalizations of the Extraction and the Existential quantification axioms, which show the equivalence between the axiomatic definition of a domain-free information algebra given here with the one in \citep{kohlas17}.

\begin{theorem} \label{th:GenCondIndep}
Let $\desirs_1,\desirs_2$ and $\desirs$ be elements of $\Phi$ and $x,y,z \in Q$, such that $x \bot y \vert z$. Then
\begin{enumerate}
\item if $x$ is a support of $\desirs$,
\begin{equation*}
\epsilon_y(\desirs) = \epsilon_y(\epsilon_z(\desirs)).
\end{equation*}
 \item If $x$ is a support of $\desirs_1$ and $y$ of $\desirs_2$,
 \begin{equation*}
\epsilon_z(\desirs_1 \cdot \desirs_2) = \epsilon_z(\desirs_1) \cdot \epsilon_z(\desirs_2).
\end{equation*}
\end{enumerate}
\end{theorem}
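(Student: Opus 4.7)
Both parts of Theorem \ref{th:GenCondIndep} become accessible once the hypothesis $x \bot y \vert z$ is rewritten, via the equivalence $x \bot y \vert z \Leftrightarrow x \vee z \bot y \vee z \vert z$ noted earlier in the paper, into the exact shape required by the Extraction axiom. Since that axiom is stated for a support of the form $x \vee z$ rather than $x$ alone, the plan is to use Proposition \ref{prop:finersupports} to promote the given supports from $x$ (or $y$) to $x \vee z$ (respectively $y \vee z$), apply the Extraction axiom or the third Existential Quantifier axiom at the promoted level, and then descend back down to $y$ or $z$ via the support-collapsing identities of Lemma \ref{le:SuppotProp}.

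\emph{Part 1.} Since $x \leq x \vee z$, Proposition \ref{prop:finersupports} makes $x \vee z$ a support of $\desirs$, so the Extraction axiom applied to $\desirs$ yields
$$\epsilon_{y \vee z}(\desirs) = \epsilon_{y \vee z}(\epsilon_z(\desirs)).$$
Applying $\epsilon_y$ to both sides and invoking Lemma \ref{le:SuppotProp}(6) (which, because $y \leq y \vee z$, gives $\epsilon_y \circ \epsilon_{y \vee z} = \epsilon_y$) collapses the outer $\epsilon_{y \vee z}$ on each side and delivers the desired identity $\epsilon_y(\desirs) = \epsilon_y(\epsilon_z(\desirs))$.

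\emph{Part 2.} Here the idea is to first compute $\epsilon_{y \vee z}(\desirs_1 \cdot \desirs_2)$ and only then apply $\epsilon_z$. Since $y \vee z$ is a support of $\desirs_2$ (again by Proposition \ref{prop:finersupports}), the third Existential Quantifier axiom at the level $y \vee z$ gives $\epsilon_{y \vee z}(\desirs_1 \cdot \desirs_2) = \epsilon_{y \vee z}(\desirs_1) \cdot \desirs_2$. The Extraction axiom applied to $\desirs_1$ yields $\epsilon_{y \vee z}(\desirs_1) = \epsilon_{y \vee z}(\epsilon_z(\desirs_1))$, and since $\epsilon_z(\desirs_1)$ already has support $z \leq y \vee z$, Lemma \ref{le:SuppotProp}(5) collapses this to $\epsilon_z(\desirs_1)$. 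Therefore $\epsilon_{y \vee z}(\desirs_1 \cdot \desirs_2) = \epsilon_z(\desirs_1) \cdot \desirs_2$. Applying $\epsilon_z$ to both sides, Lemma \ref{le:SuppotProp}(6) absorbs the outer $\epsilon_{y \vee z}$ on the left, and the third Existential Quantifier axiom on the right factorizes $\epsilon_z(\epsilon_z(\desirs_1) \cdot \desirs_2) = \epsilon_z(\desirs_1) \cdot \epsilon_z(\desirs_2)$, which is the claim.

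The main obstacle is conceptual rather than computational: one must resist the temptation to apply the Extraction axiom directly at $x, y, z$, and instead route through $x \vee z$ and $y \vee z$ where the axiom is actually stated, using Proposition \ref{prop:finersupports} to lift supports up and the absorption laws of Lemma \ref{le:SuppotProp}(5,6) to bring extractions back down.
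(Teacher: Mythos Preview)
Your proof is correct and follows essentially the same route as the paper's: rewrite $x \bot y \vert z$ as $x \vee z \bot y \vee z \vert z$, apply the Extraction axiom at the level $y \vee z$, and then use Lemma~\ref{le:SuppotProp}(5,6) together with the third Existential Quantifier axiom to descend to $y$ (Part~1) or $z$ (Part~2). The only cosmetic difference is that in Part~1 you promote the support of $\desirs$ from $x$ to $x \vee z$ via Proposition~\ref{prop:finersupports}, whereas the paper applies the Extraction axiom directly with the hypothesis $\epsilon_x(\desirs)=\desirs$; both are valid.
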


Recall that $\Phi$ forms a lattice under information order or inclusion. It turns out that extraction commutes with intersection, i.e. meet in the lattice.

\begin{theorem} \label{th:ExtrCommMeet}
Let $\desirs_j$ with $j \in J$ be any family of elements of $\Phi$ and $x \in Q$. Then
\begin{equation} \label{eq:ExtrCommMeet}
\epsilon_x\left( \bigcap_{j \in J} \desirs_j \right)  = \bigcap_{j \in J} \epsilon_x(\desirs_j).
\end{equation}
\end{theorem}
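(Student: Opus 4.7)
The plan is to prove the two inclusions separately. For $\epsilon_x\bigl(\bigcap_{j \in J} \desirs_j\bigr) \subseteq \bigcap_{j \in J} \epsilon_x(\desirs_j)$, the monotonicity of extraction (noted just before Theorem~\ref{th:ExQuant}), applied to $\bigcap_{j \in J} \desirs_j \subseteq \desirs_j$ for each $j$, does the job immediately.

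For the reverse inclusion I would work directly from the definition $\epsilon_x(\desirs) = \mathcal{C}(\desirs \cap \gambles_x)$ and construct, for each $h$ on the right-hand side, an explicit $x$-measurable ``witness'' inside $\bigcap_{j \in J} \desirs_j$. I would first dispose of the trivial cases: if some $\desirs_j = \gambles$ it drops out of the intersection on both sides (since $\epsilon_x(\gambles) = \gambles$), so we may assume all $\desirs_j$ coherent; and if $h \in \gambles^+$ then $h \in \mathcal{C}\bigl(\bigcap_{j \in J} \desirs_j \cap \gambles_x\bigr)$ directly. Under coherence, $\mathcal{C}(\desirs_j \cap \gambles_x) = \posi\bigl((\desirs_j \cap \gambles_x) \cup \gambles^+\bigr)$, so an $h$ in the intersection admits, for every $j$, a decomposition $h = f_j + g_j$ with $f_j \in (\desirs_j \cap \gambles_x) \cup \{0\}$ and $g_j \in \gambles^+ \cup \{0\}$, obtained by grouping the $\posi$-summands (each piece is closed under positive combinations). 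The assumption $h \notin \gambles^+$, together with $h \neq 0$ (which follows from $h \in \epsilon_x(\desirs_j) \subseteq \desirs_j$ and \ref{D2}), then forces $f_j \in \desirs_j \cap \gambles_x$ with $f_j \neq 0$ for every $j$.

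The key step is to replace the $j$-dependent $f_j$'s by a single $x$-measurable gamble lying in every $\desirs_j$. I would set $\hat h(\omega) \coloneqq \inf_{\omega' \equiv_x \omega} h(\omega')$, the pointwise largest $x$-measurable gamble dominated by $h$; it is bounded because $h$ is, and it is nonzero because $h \notin \gambles^+ \cup \{0\}$ forces $h$ to be strictly negative on some $\omega_0$, giving $\hat h(\omega_0) < 0$. Since each $f_j$ is $x$-measurable with $f_j \leq h$ (as $g_j \geq 0$), one has $f_j \leq \hat h$ blockwise, hence $\hat h - f_j \geq 0$. In the subcase $\hat h = f_j$ we already have $\hat h \in \desirs_j$; otherwise $\hat h - f_j \in \gambles^+$, so \ref{D1} and \ref{D3} yield $\hat h = f_j + (\hat h - f_j) \in \desirs_j$. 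This holds for every $j$, placing $\hat h$ in $\bigl(\bigcap_{j \in J} \desirs_j\bigr) \cap \gambles_x$. Writing $h = \hat h + (h - \hat h)$ with $h - \hat h \geq 0$ then exhibits $h$ as a positive combination of an element of $\bigcap_{j \in J} \desirs_j \cap \gambles_x$ and (if nonzero) an element of $\gambles^+$, giving $h \in \epsilon_x\bigl(\bigcap_{j \in J} \desirs_j\bigr)$.

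The main obstacle is precisely this unification step: the algebraic axioms in Theorem~\ref{th:ExQuant} and Theorem~\ref{th:CombExtrProp} capture how extraction interacts with combination but not with arbitrary intersections, so an ad hoc blockwise-infimum construction exploiting the concrete function-space structure of gambles seems unavoidable; the rest of the proof is then bookkeeping with \ref{D1} and \ref{D3}.
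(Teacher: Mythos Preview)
Your proof is correct. The paper does not give a self-contained argument here---it defers to the multivariate case in \cite{kohlas21}---but the overall scheme is the same: monotonicity of $\epsilon_x$ for one inclusion, and the explicit $\posi$ representation of $\mathcal{C}$ under coherence for the other, after discarding the null elements.

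The one point where you genuinely diverge from the intended argument is the unification step. The sketch the authors have in mind argues that once $h=f_j+g_j$ with $f_j\in\desirs_j\cap\gambles_x$ and $g_j\in\gambles^+\cup\{0\}$, any fixed $f_i$ already lies in \emph{every} $\desirs_j\cap\gambles_x$, so one of the decompositions serves as a uniform witness. Your route instead manufactures a canonical witness $\hat h(\omega)=\inf_{\omega'\equiv_x\omega}h(\omega')$, the largest $x$-measurable gamble below $h$, and shows $f_j\le\hat h$ forces $\hat h\in\desirs_j$ for each $j$. This is cleaner: it avoids any implicit appeal to uniqueness of the $\posi$-decomposition, works uniformly over arbitrary (possibly infinite) index sets $J$, and makes explicit that the result rests on the concrete function-space structure of gambles rather than on the abstract existential-quantifier axioms---exactly the point you flag as the ``main obstacle.'' The bookkeeping with \ref{D1}--\ref{D4} and the boundedness/nonvanishing checks on $\hat h$ are all in order.
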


This result can be proven analogously to the correspondent result for multivariate possibility sets shown in \cite{kohlas21}.

An information algebra like $\Phi$, where $(\Phi,\leq)$ is a lattice under information order and satisfies (\ref{eq:ExtrCommMeet}), is called a \textit{lattice information algebra} (see \citep{kohlasschmid13}).

 In the next section we will see moreover that the information algebra of coherent sets of gambles is also an \emph{atomistic} information algebra.




\section{Atoms} \label{sec:Atoms}


In  certain  information  algebras  there  are  maximally  informative  elements,  called \emph{atoms}.
In terms of the information algebra $\Phi$, we have:
\begin{equation*}
M \le \desirs \textrm{ for } \desirs \in \Phi \iff \desirs= M \textrm{ or } \desirs=0,
\end{equation*}
 if $M$ is a maximal set of gambles. Clearly we have also $M \neq 0$. These are the characterizing properties of \emph{atoms}, therefore, 
maximal sets $M$ are atoms in the information algebra $\Phi$.
This is a well-know concept in information algebras (see \citep{kohlas03}). For example, the following are elementary properties of atoms, immediately derivable from the definition. If $M,M_1$ and $M_2$ are atoms of $\Phi$ and $\desirs \in \Phi$, then:
\begin{enumerate}
\item $M \cdot \desirs = M$ or $M \cdot \desirs = 0$,
\item either $\desirs \leq M$ or $M \cdot \desirs = 0$,
\item either $M_1 = M_2$ or $M_1 \cdot M_2 = 0$.
\end{enumerate}

Let $At(\Phi)$ denote the set of all atoms of $\Phi$ (maximal sets of $\Phi$). Moreover, for any $\desirs \in \Phi$ such that $\desirs \not= 0$, let $At(\desirs)$ denote the set of all atoms $M$ which dominate $\desirs$, that is: \begin{equation*}
    At(\desirs) \coloneqq \{M \in At(\Phi):\desirs \subseteq M\}.
\end{equation*}

In general such sets may be empty.  Not so in the case of coherent sets of gambles. In the case of the information algebra of coherent sets of gambles, we have in fact a number of additional properties concerning atoms:
\begin{itemize}
    \item for any set $\desirs \in C(\Omega)$, $At(\desirs)$ is not empty. An information algebra that satisfies this property is called \textit{atomic} (see \citep{kohlas03}).
    \item For any
    set $\desirs \in C(\Omega)$,
    \begin{equation*}
\desirs = \bigcap At(\desirs),
\end{equation*}
that is, any coherent set of gambles is the infimum, in information order, of the atoms it is contained in. An atomic information algebra which satisfies this additional condition is called \emph{atomic composed} (see \citep{kohlas03}) or \textit {atomistic} (see \citep{kohlasschmid14,kohlasschmid16}). 
\item For any subset $A$ of $At(\Phi)$, we have also that:
\begin{equation*}
    \bigcap A = \desirs \in C(\Omega),
\end{equation*}
so that $A \subseteq At(\desirs)$. In general however $A \not= At(\desirs)$.
\end{itemize}


With a general result of atomistic information algebras, we show that the subalgebras $\epsilon_x(\Phi)$ for $x \in Q$ are also atomistic. 

\begin{theorem}\label{th:epsilonX_atomistic}
For any $x \in Q$, the subalgebra $\epsilon_x(\Phi)$ is also atomistic and $At(\epsilon_x(\Phi)) = \epsilon_x(At(\Phi)) = \{\epsilon_x(M):M \in At(\Phi)\}$.
\end{theorem}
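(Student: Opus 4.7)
The plan is to establish the identity $At(\epsilon_x(\Phi)) = \epsilon_x(At(\Phi))$ first, and then derive atomisticity as a direct consequence using Theorem \ref{th:ExtrCommMeet}. Throughout, I view $\epsilon_x(\Phi) = \{\desirs \in \Phi : \epsilon_x(\desirs) = \desirs\}$ as the subalgebra of elements having $x$ as support.

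First I would show the inclusion $\epsilon_x(At(\Phi)) \subseteq At(\epsilon_x(\Phi))$. Fix $M \in At(\Phi)$. Then $\epsilon_x(M) \in \epsilon_x(\Phi)$ by Lemma \ref{le:SuppotProp}(3), and $\epsilon_x(M) \neq 0$ by Lemma \ref{le:SuppotProp}(2). Now suppose $\desirs \in \epsilon_x(\Phi)$ satisfies $\desirs \leq \epsilon_x(M)$. Since $\epsilon_x(M) \leq M$, also $\desirs \leq M$, and because $M$ is an atom of $\Phi$, either $\desirs = 0$ or $\desirs = M$. In the latter case $\desirs = \epsilon_x(\desirs) = \epsilon_x(M)$. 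Hence $\epsilon_x(M)$ is an atom of $\epsilon_x(\Phi)$.

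Next I would prove the reverse inclusion. Take any atom $N$ of $\epsilon_x(\Phi)$; in particular $N = \epsilon_x(N)$ and $N \neq 0$, so $N \in C(\Omega)$. By atomisticity of $\Phi$, $N = \bigcap At(N)$, and applying $\epsilon_x$ together with Theorem \ref{th:ExtrCommMeet} gives
\begin{equation*}
N = \epsilon_x(N) = \bigcap_{M \in At(N)} \epsilon_x(M).
\end{equation*}
Atomicity of $\Phi$ ensures $At(N) \neq \emptyset$, and by the first step each $\epsilon_x(M)$ is an atom of $\epsilon_x(\Phi)$. Pick any $M \in At(N)$; then $N \leq \epsilon_x(M)$, and since $\epsilon_x(M)$ is an atom of $\epsilon_x(\Phi)$ and $N \neq 0$, we conclude $N = \epsilon_x(M)$. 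This proves $At(\epsilon_x(\Phi)) = \epsilon_x(At(\Phi))$.

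Finally I would verify that $\epsilon_x(\Phi)$ is atomistic. For any $\desirs \in \epsilon_x(\Phi)$ with $\desirs \neq 0$, the same use of Theorem \ref{th:ExtrCommMeet} gives
\begin{equation*}
\desirs = \epsilon_x(\desirs) = \bigcap_{M \in At(\desirs)} \epsilon_x(M),
\end{equation*}
and each $\epsilon_x(M)$ is, by the first step, an atom of $\epsilon_x(\Phi)$ that contains $\desirs$. Writing $At_{\epsilon_x(\Phi)}(\desirs)$ for the set of atoms of $\epsilon_x(\Phi)$ above $\desirs$, we obtain $\bigcap At_{\epsilon_x(\Phi)}(\desirs) \subseteq \bigcap_{M \in At(\desirs)} \epsilon_x(M) = \desirs$, and the reverse inclusion is immediate, so equality holds. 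The main subtlety, which is handled cleanly by Theorem \ref{th:ExtrCommMeet}, is ensuring that the extraction operator can be pushed inside the (possibly infinite) intersection that represents $\desirs$ as a meet of atoms of $\Phi$; without that commutation property the correspondence between atoms of the two algebras would not transfer to arbitrary elements.
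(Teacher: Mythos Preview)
Your first step misapplies the atom property by reversing its direction. In this paper the information order has $0=\mathcal{L}(\Omega)$ as the \emph{top} element, and an atom $M$ is characterised by $M\le\desirs\Rightarrow\desirs\in\{M,0\}$ (see the display opening Section~\ref{sec:Atoms}). The implication does not run the other way: from $\desirs\le M$ one cannot conclude $\desirs\in\{0,M\}$; for instance $1=\mathcal{L}^{+}\le M$ while $1\neq 0$ and $1\neq M$. Hence your chain ``$\desirs\le\epsilon_x(M)\le M$, so $\desirs=0$ or $\desirs=M$'' is invalid. To prove that $\epsilon_x(M)$ is an atom of $\epsilon_x(\Phi)$ you must start from $\epsilon_x(M)\le\desirs'$ with $\desirs'\in\epsilon_x(\Phi)$, and here one really needs the paper's route via the elementary property $M\cdot\psi\in\{M,0\}$ together with the existential-quantifier identity $\epsilon_x(\epsilon_x(\desirs')\cdot M)=\epsilon_x(\desirs')\cdot\epsilon_x(M)$.

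The same reversal recurs in your reverse-inclusion step: from $N\le\epsilon_x(M)$ you invoke that $\epsilon_x(M)$ is an atom, but that yields nothing. What works is invoking that \emph{$N$} is an atom, giving $\epsilon_x(M)\in\{N,0\}$, hence $N=\epsilon_x(M)$ since $\epsilon_x(M)\neq 0$; with this fix the reverse inclusion goes through (the paper does not spell out this direction explicitly). As a side remark, your appeal to Theorem~\ref{th:ExtrCommMeet} in the atomisticity step is correct but unnecessary: from $\epsilon_x(\desirs)\subseteq\epsilon_x(M)\subseteq M$ for every $M\in At(\epsilon_x(\desirs))$ one gets $\epsilon_x(\desirs)\subseteq\bigcap_M\epsilon_x(M)\subseteq\bigcap_M M=\epsilon_x(\desirs)$ by a direct sandwich, which is essentially the paper's argument.
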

 
We call $\epsilon_x(M)$ for $M \in At(\Phi)$ local atoms for $x$. Indeed, they represent maximally informative pieces of information for question $x$ (or partition $\partit_x$). 

For multivariate possibility sets, it is well known that atomistic information algebras can be embedded into an information algebra of subsets of $At(\Phi)$, a so-called set or relational algebra \citep{kohlas03}, see also \cite{kohlas21}. This is an important representation theorem, since it establishes a link of information algebras with a Boolean structure. The result extends to so-called commutative information algebras (see \citep{kohlasschmid16}) and is expected to hold also for the general case considered here, a result yet to be established.

Up to now we concentrate ourselves on the domain-free view of the information algebra of coherent sets of gambles. In the next section we will derive a labeled version of it. It should be clear moreover that all the results shown in this section could equivalently be expressed in this labeled view.

\section{Labeled Information Algebras} \label{sec:LabInfAlg}

The domain-free view of information algebras treats the general case of coherent sets of gambles defined on $\Omega$. However, it is well known that, if a coherent set of gambles has support $x$, it is essentially determined by values of gambles defined on smaller sets of possibilities than $\Omega$, namely on frames representing the possible answers to the question $x$. 

Indeed, if a coherent set of gambles $\desirs$ has support $x$, it means that $\desirs= \mathcal{C}( \desirs \cap \gambles_x)$. Therefore, it contains the same information of the set $\desirs \cap \gambles_x$ that is in a one-to-one correspondence with a set $\desirs'$ directly defined on 
blocks of $\partit_x$ (see for example \citep{mirzaffalon20}).
This view leads to another, so-called labeled version of an information algebra that clearly is better suited for computational purposes. 

We start deriving a labeled view of the information algebra of coherent sets of gambles, using a general method for  domain-free information algebras to derive corresponding labeled ones (see \citep{kohlas03}). 
From a labeled algebra, the domain-free view may be reconstructed. So the two are equivalent (see \cite{kohlas03, kohlas17}).  
Consider therefore the domain-free information algebra $\Phi$ on $\Omega$ relative to a set $Q$ of questions, represented by the family $\mathcal{Q} = \{\partit_x:x \in Q\}$ of partitions, as described in Section \ref{sec:InfAlgs}. Let then $\Psi_x(\Omega)$, or $\Psi_x$ when there is no ambiguity, denote the set of all pairs $(\desirs,x)$, where $\desirs \in \Phi$ and $x \in Q$ is a support of $\desirs$. Consider then the set 
\begin{equation*}
\Psi(\Omega) \coloneqq \bigcup_{x \in Q} \Psi_x(\Omega),
\end{equation*}
also denoted with $\Psi$
when no confusion is possible.
In this set we define the following operations. Given $x,y \in Q$,  $\desirs, \desirs_1, \desirs_2 \in \Phi$:

\begin{enumerate}
\item \textit{Labeling:} $d(\desirs,x) \coloneqq x$.
\item \textit{Combination:} $(\desirs_1,x) \cdot (\desirs_2,y) \coloneqq (\desirs_1 \cdot \desirs_2,x \vee y)$, where $\desirs_1 \cdot \desirs_2$ is the combination in $\Phi$.
\item \textit{Transport:}  $t_y(\desirs,x) \coloneqq (\epsilon_y(\desirs),y)$, where $\epsilon_y$ denotes the extraction operator in $\Phi$.
\end{enumerate}

It is straightforward to verify the following properties of these operations, given $x,y,z \in Q$, $\desirs, \desirs_1,\desirs_2 \in \Phi$:

\begin{enumerate}
\item \textit{Semigroup:} $(\Psi,\cdot)$ is a commutative semigroup.
\item \textit{Quasi-Separoid} $(Q,\leq)$ is a join semilattice and $x \bot y \vert z$ a quasi-separoid in $Q$.
\item \textit{Labeling} $d((\desirs_1,x) \cdot (\desirs_2,y)) = x \vee y$, $d(t_y(\desirs,x)) = y$.
\item \textit{Unit and Null} For all $x \in Q$, $(1,x) \cdot (1,y) = (1,x \vee y)$,
$(\desirs,x) \cdot (1,x) = (\desirs,x)$,
$(\desirs,x) \cdot (0,x) = (0,x)$ and $t_y(\desirs,x) = (0,y)$ if and only if $(\desirs,x) = (0,x)$, $(\desirs,y) \cdot (1,x)=t_{x \vee y}(\desirs,y)$.
\item \textit{Transport:} $x  \bot y  \vert z$ implies $t_y(\desirs,x) = t_y(t_z(\desirs,x ))$.
\item \textit{Combination:} $x \perp y \vert z$ implies $t_z((\desirs_1,x) \cdot (\desirs_2,y))=
t_z(\desirs_1,x) \cdot t_z(\desirs_2,y)$.
\item \textit{Identity:} $t_x(\desirs,x)=(\desirs,x)$.
\item \textit{Idempotency:} If $y \leq x$, then $t_y(\desirs,x) \cdot (\desirs,x) = (\desirs,x)$.
\end{enumerate}

We take these statements as the axioms of a labeled information algebra, in this case the labeled algebra associated with its domain-free alter ego. 




According to considerations made above, we claim that we might equivalently work with elements $\desirs \cap \mathcal{L}_x$, where $\desirs$ is a coherent set of gambles on $\mathcal{L}(\Omega)$. 

We may therefore define a labeled algebra based on elements $\desirs \cap \mathcal{L}_x$. 
As we have previously noticed, the main advantage of this reformulation is that we can then think of them to be composed by gambles directly defined on blocks of $\partit_x$. This is essential for computational purposes.

So, let us define $\tilde{\Psi}_x(\Omega)$, or more simply $\tilde{\Psi}_x$, to be the family of all sets $(\desirs \cap \mathcal{L}_x,x)$, where $\desirs \in \Phi, \; x \in Q$. Let then
\begin{equation*}
\tilde{\Psi}(\Omega) \coloneqq \bigcup_{x \in Q} \tilde{\Psi}_x(\Omega),
\end{equation*}
also indicated with $\tilde{\Psi}$ when no ambiguity is possible.
In $\tilde{\Psi}$ we define the following operations. Given $(\desirs \cap \gambles_x,x), (\desirs_1 \cap \gambles_x,x), (\desirs_2 \cap \gambles_y,y) \in \tilde{\Psi}$:
\begin{enumerate}
\item \textit{Labeling:} $d(\desirs \cap \mathcal{L}_x,x) \coloneqq x$.
\item \textit{Combination:} $(\desirs_1 \cap \mathcal{L}_x,x) \cdot (\desirs_2 \cap \mathcal{L}_y,y) \coloneqq ((\mathcal{C}(\desirs_1 \cap \mathcal{L}_x) \cdot \mathcal{C}(\desirs_2 \cap \mathcal{L}_y)) \cap \mathcal{L}_{x \vee y}, x \vee y)$, where $\cdot$ denotes the combination operator in $\Phi$.
\item \textit{Transport:} $t_y(\desirs \cap \mathcal{L}_x,x) \coloneqq (\mathcal{C}(\desirs  \cap \mathcal{L}_x)   \cap \mathcal{L}_y,y)$.
\end{enumerate}
Note that we denote combination and transport in $\Psi$ and $\tilde{\Psi}$ by the same symbol; it will always be clear from the context which one is meant.

It is easy to verify that the map $(\desirs,x) \mapsto (\desirs \cap \gambles_x,x)$ from $\Psi$ to $\tilde{\Psi}$ preserves combination and transport, namely
\begin{align*}
    (\desirs_1,x) \cdot (\desirs_2,y) &= (\desirs_1 \cdot \desirs_2,x \vee y) \mapsto \\
((\desirs_1 \cdot \desirs_2) \cap \mathcal{L}_{x \vee y}, x \vee y) &= (\desirs_1 \cap \mathcal{L}_x, x) \cdot (\desirs_2 \cap \mathcal{L}_y,y),
\end{align*}
because $\desirs_1= \mathcal{C}(\desirs_1 \cap \gambles_x)$ and $\desirs_2= \mathcal{C}(\desirs_2 \cap \gambles_y)$ respectively. Moreover:
\begin{align*}
    t_y(\desirs,x) &= (\epsilon_y(\desirs),y) \mapsto \\
  (\epsilon_y(\desirs) \cap \mathcal{L}_y,y) &= t_y(\desirs \cap \mathcal{L}_x,x),  
\end{align*}
because $\desirs= \mathcal{C}(\desirs \cap \gambles_x)$ and $\epsilon_y(\desirs) \cap \gambles_y =\desirs \cap \gambles_y$, indeed $ \desirs \cap \gambles_y \subseteq \mathcal{C}(\desirs \cap \gambles_y) \cap \gambles_y  = \epsilon_y(\desirs) \cap \gambles_y \subseteq \desirs \cap \gambles_y$.
This implies that the axioms to which $\Psi$ is submitted carry over to $\tilde{\Psi}$, which thereby becomes a labeled information algebra. 
This map between $\Psi$ and $\tilde{\Psi}$ is also bijiective, hence an isomorphism. Indeed:
\begin{itemize}
    \item the map is \emph{surjective}. Consider $(\desirs \cap \gambles_x,x) \in \tilde{\Psi}$, 
    then $(\mathcal{C}(\desirs  \cap \gambles_x), x) \mapsto (\mathcal{C}(\desirs  \cap \gambles_x)\cap \gambles_x,x)$ and $\mathcal{C}(\desirs  \cap \gambles_x)\cap \gambles_x = \desirs \cap \gambles_x$. 
    
    \item The map is \emph{injective}. Suppose to have  $(\desirs,x) \mapsto (\desirs \cap \mathcal{L}_x,x)$
    and $(\desirs',y) \mapsto (\desirs' \cap \mathcal{L}_y,y)$
    such that
    $(\desirs \cap \mathcal{L}_x,x)= (\desirs' \cap \mathcal{L}_y,y)$.
    Then, first of all $x=y$ and $\desirs= \epsilon_x(\desirs) = \mathcal{C}(\desirs \cap \mathcal{L}_x) = \mathcal{C}(\desirs' \cap \mathcal{L}_x) = \epsilon_x(\desirs')= \desirs'$.
\end{itemize}

We have proved the following theorem.

\begin{theorem}
The labeled information algebras $\Psi$ and $\tilde{\Psi}$ are isomorphic.
\end{theorem}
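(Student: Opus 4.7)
The plan is to verify that the map $\varphi: \Psi \to \tilde{\Psi}$ defined by $\varphi(\desirs,x) \coloneqq (\desirs \cap \gambles_x, x)$ is a well-defined bijection that commutes with the labeled algebra operations. Well-definedness rests on the single key identity: since $(\desirs,x) \in \Psi$ means $x$ is a support of $\desirs$, we have $\desirs = \epsilon_x(\desirs) = \mathcal{C}(\desirs \cap \gambles_x)$ by the definition of the extraction operator. This identity will be invoked repeatedly.

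For the homomorphism part, I would check the three labeled operations in turn. Labeling is immediate from the definition of $\varphi$. For combination, starting from $\varphi((\desirs_1,x) \cdot (\desirs_2,y)) = \varphi(\desirs_1 \cdot \desirs_2, x \vee y) = ((\desirs_1 \cdot \desirs_2) \cap \gambles_{x \vee y}, x \vee y)$, I would rewrite $\desirs_1 = \mathcal{C}(\desirs_1 \cap \gambles_x)$ and $\desirs_2 = \mathcal{C}(\desirs_2 \cap \gambles_y)$ using the support identity, and then recognize the result as $\varphi(\desirs_1,x) \cdot \varphi(\desirs_2,y)$ under the combination in $\tilde{\Psi}$. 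For transport, $\varphi(t_y(\desirs,x)) = \varphi(\epsilon_y(\desirs),y) = (\epsilon_y(\desirs) \cap \gambles_y, y)$, and I would use the short chain $\desirs \cap \gambles_y \subseteq \mathcal{C}(\desirs \cap \gambles_y) \cap \gambles_y = \epsilon_y(\desirs) \cap \gambles_y \subseteq \desirs \cap \gambles_y$ to collapse this to $(\mathcal{C}(\desirs \cap \gambles_x) \cap \gambles_y, y) = t_y(\desirs \cap \gambles_x, x)$.

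For bijectivity, surjectivity is straightforward: any $(\desirs \cap \gambles_x, x) \in \tilde{\Psi}$ is hit by $(\mathcal{C}(\desirs \cap \gambles_x), x)$, which is a valid element of $\Psi$ because $x$ is a support of $\mathcal{C}(\desirs \cap \gambles_x) = \epsilon_x(\desirs)$ by item 3 of Lemma \ref{le:SuppotProp}; and indeed $\mathcal{C}(\desirs \cap \gambles_x) \cap \gambles_x = \desirs \cap \gambles_x$. For injectivity, if $\varphi(\desirs,x) = \varphi(\desirs',y)$, equality of labels forces $x = y$, and then the support identity gives $\desirs = \mathcal{C}(\desirs \cap \gambles_x) = \mathcal{C}(\desirs' \cap \gambles_x) = \desirs'$.

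The main obstacle is psychological rather than technical: one must keep track of the fact that elements of $\Psi$ carry supports as part of their type, so well-definedness of $\varphi$ and the computations above implicitly use that $x$ supports $\desirs$ whenever we write $(\desirs,x)$. The only real content is the identity $\mathcal{C}(\desirs \cap \gambles_x) \cap \gambles_x = \desirs \cap \gambles_x$ together with its consequence $\epsilon_y(\desirs) \cap \gambles_y = \desirs \cap \gambles_y$; once these are in hand, every verification above reduces to rewriting. No additional axiom of the domain-free algebra is needed beyond what is already invoked in the excerpt.
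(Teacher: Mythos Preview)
Your proposal is correct and follows essentially the same route as the paper: the same map $(\desirs,x)\mapsto(\desirs\cap\gambles_x,x)$, the same use of the support identity $\desirs=\mathcal{C}(\desirs\cap\gambles_x)$ for combination, the same chain $\desirs\cap\gambles_y\subseteq\mathcal{C}(\desirs\cap\gambles_y)\cap\gambles_y=\epsilon_y(\desirs)\cap\gambles_y\subseteq\desirs\cap\gambles_y$ for transport, and the same surjectivity/injectivity arguments. Your explicit remark that $x$ is a support of $\mathcal{C}(\desirs\cap\gambles_x)$ via item~3 of Lemma~\ref{le:SuppotProp} is a small clarification the paper leaves implicit, but otherwise the proofs coincide.
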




\section{Commutative Extractors} \label{sec:CommExtractors}

We consider in this section lattices $\mathcal{Q}$ of \textit{commuting} partitions. This covers for instance the important case of multivariate possibility sets, see the end of Section \ref{sec:Questions}.
Recall that this implies that $\partit_x \bot \partit_y \vert \partit_z$ if and only if $(\partit_x \vee \partit_z) \wedge (\partit_y \vee \partit_z) = \partit_z$, for every $\partit_x,\partit_y\partit_z \in \mathcal{Q}$ (Theorem \ref{th:CommCondIndep}). 
The main effect of this is that all extraction operators commute under composition, whereas, in general, this is only the case if $x \leq y$, see items 5 and 6 of Lemma \ref{le:SuppotProp}. 
Therefore, we can show the following result.
\begin{proposition} \label{prop:CommExtrOp}
If $\mathcal{Q}$ is a sublattice of $(Part(\Omega),\leq)$ of commuting partitions, then for all $x,y,z \in Q$,
\begin{equation*}
\epsilon_x \circ \epsilon_y = \epsilon_y \circ \epsilon_x = \epsilon_{x \wedge y}.
\end{equation*}
\end{proposition}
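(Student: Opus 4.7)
The plan is to deduce the identity $\epsilon_x \circ \epsilon_y = \epsilon_{x\wedge y}$ by combining the characterization of conditional independence for commuting partitions (Theorem \ref{th:CommCondIndep}) with the generalized extraction property (Theorem \ref{th:GenCondIndep}) and the routine support manipulations of Lemma \ref{le:SuppotProp}.

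First I would establish the conditional independence $x \bot y \mid x \wedge y$. Since by hypothesis $\partit_x$ and $\partit_y$ commute, Theorem \ref{th:CommCondIndep} applies, and with the choice $z = x \wedge y$ the right-hand side of the criterion reads
\begin{equation*}
(\partit_x \vee \partit_{x\wedge y}) \wedge (\partit_y \vee \partit_{x\wedge y}) = \partit_x \wedge \partit_y = \partit_{x\wedge y},
\end{equation*}
so indeed $x \bot y \mid x \wedge y$.

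Next, fix $\desirs \in \Phi$ and consider $\desirs' \coloneqq \epsilon_x(\desirs)$. By item 3 of Lemma \ref{le:SuppotProp}, $x$ is a support of $\desirs'$, so Theorem \ref{th:GenCondIndep}(1) applied with the independence $x \bot y \mid x \wedge y$ yields
\begin{equation*}
\epsilon_y(\desirs') = \epsilon_y\bigl(\epsilon_{x\wedge y}(\desirs')\bigr).
\end{equation*}
Now I would collapse the nested extractions using the fact that $x\wedge y \leq x$ and $x \wedge y \leq y$: item 6 of Lemma \ref{le:SuppotProp} gives $\epsilon_{x\wedge y}(\epsilon_x(\desirs)) = \epsilon_{x\wedge y}(\desirs)$, and item 5 gives $\epsilon_y(\epsilon_{x\wedge y}(\desirs)) = \epsilon_{x\wedge y}(\desirs)$. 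Chaining these,
\begin{equation*}
\epsilon_y(\epsilon_x(\desirs)) = \epsilon_y\bigl(\epsilon_{x\wedge y}(\desirs)\bigr) = \epsilon_{x\wedge y}(\desirs).
\end{equation*}
By the symmetric argument (swapping the roles of $x$ and $y$, using $y \bot x \mid x\wedge y$, which follows from C2 of Theorem \ref{th:QSepOfPart}), $\epsilon_x(\epsilon_y(\desirs)) = \epsilon_{x\wedge y}(\desirs)$. Since $\desirs$ was arbitrary, this proves the claimed equality of operators.

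The only point that requires a little care is verifying that the conditional independence really holds from commutativity alone; this is where Theorem \ref{th:CommCondIndep} does the heavy lifting, and without the commutativity hypothesis the meet $\partit_x \wedge \partit_y$ need not even coincide with the partition associated to $\equiv_x \star \equiv_y$, so the rest of the argument would break down. Once that step is secured, the proof is just a careful bookkeeping exercise on the axioms of a domain-free information algebra.
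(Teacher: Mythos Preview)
Your proof is correct and follows essentially the same route as the paper's: establish $x \bot y \mid x \wedge y$ from commutativity, invoke Theorem~\ref{th:GenCondIndep}(1) on $\epsilon_x(\desirs)$ (which has support $x$), and then collapse the nested extractions via items 5 and 6 of Lemma~\ref{le:SuppotProp}. The paper's version is terser---it merely asserts the conditional independence ``since the partitions commute'' without spelling out the appeal to Theorem~\ref{th:CommCondIndep}---but the argument is the same.
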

This implies in particular that the composition of any extraction operator gives again an extraction operator, which is not the case in general. In particular, this fact can constitute the basis for an alternative Extraction axiom for the domain-free information algebra: 

\begin{enumerate}
\item[4] \textit{Commutative Extraction:} For all $x,y \in Q$, $\epsilon_y(\epsilon_x(\desirs)) = \epsilon_y(\epsilon_{x \wedge y}(\epsilon_x(\desirs))) = \epsilon_{x \wedge y}(\desirs).$
\end{enumerate}
Indeed, the old extraction axiom can be recovered from this one, since $x \vee z \bot y \vee z \vert z$ (equivalent to $x \bot y \vert z$) implies $(x \vee z) \wedge (y \vee z) = z$.
In the labeled case, it turns out
that it is sufficient to consider transport only for $y \le x$, if $x$ is a support of $\desirs$. Thus,
transport becomes projection or marginalization (see \citep{kohlas17}).

Such information algebras will be called \textit{commutative} (see \citep{kohlas03,kohlasschmid16}). 

According to the Commutative extraction axiom, the set of all extraction operators, that we denote with $E$, is closed under composition, hence $(E; \circ)$ form an idempotent,
commutaitive semigroup. One might replace the Commutative extraction axiom also by the requirement
that the extraction operators form an idempotent, commutative semigroup (see \cite{kohlasschmid16}).
The Combination axiom can also be simplified a bit. Let us consider the labeled case. We have:
\begin{equation*}
t_x((\desirs_1,x) \cdot (\desirs_2,y)) = (\epsilon_x(\desirs_1 \cdot \desirs_2),x)= (\desirs_1 \cdot \epsilon_x(\desirs_2),x).
\end{equation*}
Now, from commutative extraction axiom, we have $\epsilon_x(\desirs_2)=\epsilon_x(\epsilon_y(\desirs_2))= \epsilon_{x \wedge y}(\desirs_2)$. Hence, we have:
\begin{equation*}
   (\desirs_1 \cdot \epsilon_x(\desirs_2),x)= (\desirs_1 \cdot \epsilon_{x \wedge y}(\desirs_2),x)= (\desirs_1,x) \cdot t_{x \wedge y}(\desirs_2,y). 
\end{equation*}
The old axiom can then be recovered using the fact that if $x \in Q$ is a support of $\desirs$ then it is also a support of $\epsilon_y(\desirs)$ for every $y\in Q$.
These simplifications lead to the original axiomatic definition of a labeled information algebra proposed in \citep{kohlas03, shafershenoy90}.
Then, the domain-free version can be reconstructed from the labeled one. 
The general axioms of our paper can be also reconstructed from the classical multivariate version (see \cite{kohlas03, kohlas17}).

\section{Conclusions}
This  paper  presents  a  first  approach  to  information  algebras  related  to  desirable gambles on a possibility set that is not necessarily a multivariate possibility set.

There  are  many  aspects  and  issues  which are  not  addressed  here.   Foremost  is  the  issue  of  conditioning and its relations with model revision for information algebras, which in turn can be seen as the combination of the previous and new information conveyed by elements of the algebra, and belief revision for \emph{belief structures} (see \cite{decooman05}).

Further, connections with lower and upper previsions (see \cite{troffaes2014}) are not considered in this paper, as well as relationships with strictly desirable sets of gambles and almost desirable sets (see \cite{walley91}).

Finally, we would like also to analyze a particular type of information algebra, called \emph{set algebra}, that can be considered the archetype of information (see \cite{kohlas03}). In particular, we would like to show, as in the multivariate case \cite{kohlas21}, that subsets of $\Omega$ with intersection as combination and \emph{saturation operators} as extraction operators, form an example of this algebra that moreover, can be embedded in $\Phi$. This constitute also a first step to show that $\Phi$ itself, in this more general case, can be embedded into the set algebra of subsets of $At(\Phi)$.
All these subjects should be objective of future studies.

\appendix

\section{}\label{apd:first}

\begin{proof}[Proof of Theorem~\ref{th:QSepOfPart}]

C1 and C2 are obvious. To prove C3 assume $\partit_1 \bot \partit_2 \vert \partit$ and $\partit' \leq \partit_2$. Then $u \equiv_\partit u'$ implies the existence of an element $v$ such that $u \equiv_{\partit_1 \vee \partit} v$ and $u' \equiv_{\partit_2 \vee \partit} v$. But $\partit' \leq \partit_2$ means that $u' \equiv_{\partit_2 \vee \partit} v$ implies $u' \equiv_{\partit' \vee \partit} v$, and this means that $\partit_1 \bot \partit' \vert \partit$. Similarly, $u \equiv_\partit u'$ implies the existence of an element $v$ such that $u \equiv_{\partit_1 \vee \partit} v$ and $u' \equiv_{\partit_2 \vee \partit} v$, says also that $\partit_1 \bot \partit_2 \vee \partit \vert \partit$, hence C4.
\end{proof}

\begin{proof}[Proof of Theorem~\ref{th:CommCondIndep}]

We show first that $\partit_1 \bot \partit_1 \vert \partit$ implies $\partit_1 \leq \partit$. Consider blocks $B_1$, $B'_1$ of $\partit_1$ and $B$ of $\partit$. Then $B_1 \cap B \not= \emptyset$ and $B'_1 \cap B \not= \emptyset$ imply $B_1 \cap B'_1 \cap B \not= \emptyset$. But, it is possible only if $B_1=B'_1$. 
So $B$ cannot intersect two different blocks of $\partit_1$, hence $B$ must be a subset of some block of $\partit_1$ and thus $\partit_1 \leq \partit$.

Let us now divide the proof in cases.
\begin{itemize}
    \item Assume $\partit_1,\partit_2$ commute.
    If $\partit_1 \bot \partit_2 \vert \partit$, then $\partit_1 \vee \partit \bot \partit_2 \vee \partit \vert \partit$. Let
\begin{equation*}
   \partit' \coloneqq (\partit_1 \vee \partit) \wedge (\partit_2 \vee \partit). 
\end{equation*}
Then we have $\partit' \leq \partit_1 \vee \partit$ and $\partit' \leq \partit_2 \vee \partit$. Using C3,C4 and C2 we conclude that $\partit' \bot \partit' \vert \partit$ and it thus follows that $\partit' \leq \partit$. Since we have always $\partit' \geq \partit$ it follows that $\partit' = \partit$. That is one direction of the implication claimed in the theorem.

Now, assume that $(\partit_1 \vee \partit) \wedge (\partit_2 \vee \partit)=\partit$. Let us define $\partit'_1 \coloneqq (\partit_1 \vee \partit)$ and $\partit'_2 \coloneqq (\partit_2 \vee \partit)$.
Consider $\omega \equiv_\partit \omega'$. Then, by hypothesis, we have $ \omega \equiv_{\partit'_1 \wedge \partit'_2} \omega'$. This, by definition, means that there exists $\omega^{''}$ such that $\omega \equiv_{\partit'_1} \omega^{''}$ and $\omega'\equiv_{\partit'_2}\omega^{''}$. Hence $\partit_1 \bot \partit_2 \vert \partit$.

\item Now assume that $\partit_1 \bot \partit_2 \vert \partit \iff (\partit_1 \vee \partit) \wedge (\partit_2 \vee \partit)=\partit$.
Hence, given that $(\partit_1 \vee (\partit_1 \wedge \partit_2)) \wedge (\partit_2 \vee (\partit_1 \wedge \partit_2)) = \partit_1 \wedge \partit_2$, we have $\partit_1 \bot \partit_2 \vert \partit_1 \wedge \partit_2$.

Then, if $B_1$, $B_2$ and $B$ are blocks of $\partit_1$, $\partit_2$ and $\partit_1 \wedge \partit_2$ respectively, we have that $B_1 \cap B \not= \emptyset$ and $B_2 \cap B \not= \emptyset$ imply $B_1 \cap B_2 \cap  B \not= \emptyset$. Since $\partit_1 \wedge \partit_2 \leq \partit_1,\partit_2$ it follows that $B_1,B_2 \subseteq B$ and $B_1 \cap B_2 \not= \emptyset$. This means that $\partit_1$ and $\partit_2$ commute.
\end{itemize}
\end{proof}

\begin{proof}[Proof of Theorem~\ref{th:CombExtrProp}]

If $\desirs = 0$ this is obvious. So, assume $\desirs \not= 0$. Let
\begin{align*}
A \coloneqq&  \epsilon_{y \vee z}(\desirs)  = \mathcal{C}(\desirs \cap \mathcal{L}_{y \vee z}), 
\\
B \coloneqq&  \epsilon_{y \vee z}(\epsilon_z(\desirs)) =    \mathcal{C}( \mathcal{C}(\desirs \cap  \mathcal{L}_z) \cap  \mathcal{L}_{y \vee z}).
\end{align*}
Then $\desirs \cap  \mathcal{L}_z \subseteq \desirs$ implies $B \subseteq A$. 
Therefore, consider a gamble $f \in A$ so that $f \geq f'$ for a gamble $f' \in \desirs \cap \mathcal{L}_{y \vee z}$. Then we have, since $\desirs = \mathcal{C}(\desirs \cap \mathcal{L}_{x})$,
\begin{equation*}
f' \geq g, \quad g \in \desirs \cap \mathcal{L}_x,  \quad f' \textrm{ is}\ y \vee z- \textrm{measurable}.
\end{equation*}
Define for all $\omega \in \Omega$,
\begin{equation*}
g'(\omega) \coloneqq \sup_{\omega' \equiv_{y \vee z} \omega} g(\omega').
\end{equation*}
Since $f'$ is $y \vee z$-measurable, we have  $f' \geq g'$, and also $g' \in \desirs$. 
We claim that $g'$ is $z$-measurable. Indeed, consider a pair of elements $\omega \equiv_z \omega''$ and the block $B_z$ of partition $\partit_z$ which contains these two elements. Then consider the blocks $B_{y \vee z} \subseteq B_z$ and $B''_{y \vee z} \subseteq B_z$ which contain elements $\omega$ and $\omega''$ respectively. Finally consider the family of all blocks $B_{x \vee z} \subseteq B_z$. From $x \vee z \bot y \vee z \vert z$ we conclude that $B_{y \vee z} \cap B_{x \vee z} \not= \emptyset$ and $B''_{y \vee z} \cap B_{x \vee z} \not= \emptyset$ for all blocks $B_{x \vee z} \subseteq B_z$. Since $g$ is $x \vee z$-measurable, $g$ is constant on any of these blocks. Define $g(B_{x \vee z}) = g(\omega)$ if $\omega \in B_{x \vee z}$. Then it follows that
\begin{equation*}
g'(\omega) = \sup_{B_{x \vee z} \cap B_{y \vee z} \neq \emptyset} g(B_{x \vee z}) =\sup_{B_{x \vee z} \subseteq B_z} g(B_{x \vee z}),
\end{equation*}
and
\begin{equation*}
g'(\omega'')= \sup_{B_{x \vee z} \cap B''_{y \vee z} \neq \emptyset} g(B_{x \vee z}) = \sup_{B_{x \vee z} \subseteq B_z} g(B_{x \vee z}).
\end{equation*}
This shows that $g'$ is $z$-measurable, hence $g' \in \desirs \cap \mathcal{L}_z$. So we conclude that $f' \in \mathcal{C}(\desirs \cap \mathcal{L}_z) \cap \mathcal{L}_{y \vee z}$. But this implies that $f \in B$ and so $A = B$. This concludes the proof.
\end{proof}

\begin{proof}[Proof of Proposition~\ref{prop:finersupports}]
Assume that $\desirs = \mathcal{C}(\desirs \cap \mathcal{L}_x)$. If $x \leq y$, then $\mathcal{L}_x \subseteq \mathcal{L}_y$, hence $\desirs \cap \mathcal{L}_x \subseteq \desirs \cap \mathcal{L}_y$. It follows that $\desirs = \mathcal{C}(\desirs \cap \mathcal{L}_x) \subseteq \mathcal{C}(\desirs \cap \mathcal{L}_y)$. But $\mathcal{C}(\desirs \cap \mathcal{L}_y) \subseteq \desirs$, hence $\mathcal{C}(\desirs \cap \mathcal{L}_y) = \desirs$ and so $\epsilon_y(\desirs)  = \desirs$.
\end{proof}

\begin{proof}[Proof of Theorem~\ref{th:GenCondIndep}]

From $x \bot y \vert z$ it follows by the properties of a quasi separoid that $x \vee z \bot y \vee z \vert z$. Therefore by the Extraction axiom,
\begin{equation*}
\epsilon_{y \vee z}(\desirs) = \epsilon_{y \vee z}(\epsilon_z(\desirs)).
\end{equation*}
By Lemma \ref{le:SuppotProp}, we have $\epsilon_y(\epsilon_{y \vee z}(\desirs)) = \epsilon_y(\desirs)$ and $\epsilon_y(\epsilon_{y \vee z}(\epsilon_z(\desirs)) = \epsilon_y(\epsilon_z(\desirs))$. This proves the first part. 

By the Existential quantification and the Extraction axioms, we have,
\begin{equation*}
\epsilon_{y \vee z}(\desirs_1 \cdot \desirs_2) = \epsilon_{y \vee z}(\desirs_1) \cdot \desirs_2 = \epsilon_{y \vee z}(\epsilon_z(\desirs_1)) \cdot \desirs_2,
\end{equation*}
because, by the Support axiom, $y \vee z$ is a support of $\desirs_2$.

By Lemma \ref{le:SuppotProp}, the last combination equals $\epsilon_z(\desirs_1) \cdot \desirs_2$. 
But then, again by the Existential quantification axiom and Lemma \ref{le:SuppotProp},
\begin{align*}
\epsilon_z(\desirs_1 \cdot \desirs_2) &= \epsilon_z(\epsilon_{y \vee z}(\desirs_1 \cdot \desirs_2)) = \\
\epsilon_z(\epsilon_z(\desirs_1) \cdot \desirs_2) &= \epsilon_z(\desirs_1) \cdot \epsilon_z(\desirs_2).
\end{align*}
This concludes the proof.
\end{proof}

\begin{proof}[Proof of Theorem~\ref{th:epsilonX_atomistic}]

Let $M \in At(\Phi)$. Then $M \not= 0$ and thus $\epsilon_x(M) \not= 0$. Assume 	$\epsilon_x(M) \leq \epsilon_x(\desirs)$ for some $\desirs \in \Phi$. Then $\epsilon_x(M \cdot \epsilon_x(\desirs)) = \epsilon_x(M) \cdot \epsilon_x(\desirs) = \epsilon_x(\desirs)$. Since $M$ is an atom, we have either $M \cdot \epsilon_x(\desirs) = M$ or $\epsilon_x(\desirs) = 0$. In the first case $\epsilon_x(\desirs) = \epsilon_x(M \cdot \epsilon_x(\desirs)) = \epsilon_x(M)$. So $\epsilon_x(M)$ is an atom in $\epsilon_x(\Phi)$.

Further, if $0 \not= \epsilon_x(\desirs)$, then, since $\Phi$ is atomic, there is an atom $M$ such that $\epsilon_x(\desirs) \leq M$ and thus $\epsilon_x(\desirs) \leq \epsilon_x(M)$. As shown, $\epsilon_x(M)$ with $M \in At(\Phi)$, is an atom in $\epsilon_x(\Phi)$, and the subalgebra $\epsilon_x(\Phi)$ is atomic.

Now, suppose again $\epsilon_x(\desirs) \neq 0$. By 
the fact that $\epsilon_x(\desirs) \in \Phi$ with $\Phi$ atomistic, we have:
\begin{equation*}
\epsilon_x(\desirs) = \bigcap  At(\epsilon_x(\desirs)) =
\bigcap \{M:M \in At(\epsilon_x(\desirs))\})
.
\end{equation*}
But, $\epsilon_x(\desirs) \subseteq M$ holds if and only if $\epsilon_x(\desirs) \subseteq \epsilon_x(M)$. So, if we define $At_x(\epsilon_x(\desirs))$ to be the set of all atoms in $\epsilon_x(\Phi)$ dominating $\epsilon_x(\desirs)$, we obtain $\epsilon_x(\desirs) = \bigcap At_x(\epsilon_x(\desirs))$. This shows the atomicity of the subalgebra $\epsilon_x(\Phi)$.
\end{proof}

\begin{proof}[Proof of Proposition~\ref{prop:CommExtrOp}]
We have $x \bot y \vert x \wedge y$ for all $x,y \in Q$ since the partitions commute. Then, by item 1 of Theorem \ref{th:GenCondIndep} and items 5 and 6 of Lemma \ref{le:SuppotProp},
\begin{equation*}
\epsilon_y(\epsilon_x(\desirs))
=\epsilon_{y }(\epsilon_{x \wedge y}(\epsilon_x(\desirs)))=\epsilon_{x \wedge y}(\epsilon_x(\desirs)= \epsilon_{x \wedge y}(\desirs).
\end{equation*}
In the same way we obtain $\epsilon_x(\epsilon_y(\desirs)) = \epsilon_{x \wedge y}(\desirs)$. 
\end{proof}

\bibliography{isipta2021-template}

\end{document}